\title{\huge Benign Overfitting in Adversarially Robust Linear Classification}
\author
{
	Jinghui Chen\thanks{Equal contribution}~\thanks{College of Information Sciences and Technology, Pennsylvania State University, State College, PA 16802, USA; e-mail: {\tt jzc5917@psu.edu}}
	~~~and~~~
	Yuan Cao\footnotemark[1]~\thanks{Department of Statistics and Actuarial Science and Department of Mathematics, The University of Hong Kong, Hong Kong; e-mail:  {\tt yuancao@hku.hk}}
	~~~and~~~
	Quanquan Gu\thanks{Department of Computer Science, University of California, Los Angeles, CA 90095, USA; e-mail: {\tt qgu@cs.ucla.edu}}
}
\date{}
\begin{document}

\maketitle

\begin{abstract}
``Benign overfitting'', where classifiers memorize noisy training data yet still achieve a good generalization performance, has drawn great attention in the machine learning community. To explain this surprising phenomenon, a series of works have provided theoretical justification in over-parameterized linear regression, classification, and kernel methods. However, it is not clear if benign overfitting still occurs in the presence of adversarial examples, i.e., examples with tiny and intentional perturbations to fool the classifiers. In this paper, we show that benign overfitting indeed occurs in adversarial training, a principled approach to defend against adversarial examples. In detail, we prove the risk bounds of the adversarially trained linear classifier on the mixture of sub-Gaussian data under $\ell_p$ adversarial perturbations. Our result suggests that under moderate perturbations, adversarially trained linear classifiers can achieve the near-optimal standard and adversarial risks, despite overfitting the noisy training data. Numerical experiments validate our theoretical findings.  
\end{abstract}
 
\section{Introduction}
\label{sec:intro}
Modern machine learning methods such as deep learning have made many breakthroughs in a variety of application domains, including image classification \citep{he2016deep, krizhevsky2012imagenet}, speech recognition \citep{hinton2012deep} and etc. These models are typically over-parameterized: the number of model  parameters far exceeds the size of the training samples. One mystery is that, these over-parameterized models can memorize noisy training data and yet still achieve quite good generalization performances on the test data \citep{zhang2016understanding}. 
Many efforts have been made to explain this striking phenomenon, which against what the classical notion of overfitting might suggest. A line of research works \citep{soudry2018implicit,ji2019implicit,nacson2019stochastic,gunasekar2018implicit,gunasekar2018characterizing} shows that there exists the so-called implicit bias \citep{neyshabur2017implicit}: the training algorithms tend to converge to certain kinds of solutions even with no explicit regularization. Specifically, \cite{soudry2018implicit,ji2019implicit,nacson2019stochastic} demonstrate that gradient descent trained linear classifiers on logistic or exponential loss with no regularization asymptotically converge to the maximum $L_2$ margin classifier. Recent works \citep{bartlett2020benign,chatterji2020finite,cao2021risk,wang2021benign,tsigler2020benign} further shows that  over-parameterized and implicitly regularized interpolators can indeed achieve small test error, and formulate this phenomenon as ``benign overfitting''.
More concretely, suppose the classification model $f$ is parameterized by $\btheta \in \bTheta$ and the loss is denoted as $\ell(\cdot)$. The population risk is define as 
\begin{align*}
    \PP_{(\xb, y)\sim \cD}[f_{\btheta}(\xb) \neq y],
\end{align*}
where data pair $(\xb, y)$ is generated from certain data generation model. \cite{chatterji2020finite} shows that with sufficient over-parameterization, gradient descent trained maximum $L_2$ margin classifier can achieve nearly optimal population risk on noisy data for data generated from a sub-Gaussian mixture model. This suggests that the overfitting can be ``benign'' in the over-parameterized setting.


Besides these studies on the benign overfitting phenomenon, another well-known feature of modern machine learning methods is that they are vulnerable to adversarial examples.
Recent studies \citep{szegedy2013intriguing, goodfellow6572explaining} show that modern machine learning systems are brittle: slight input perturbation that is imperceptible to human eyes could mislead a well-trained classifier into wrong classification result. These malicious inputs are also known as the adversarial examples \citep{szegedy2013intriguing, goodfellow6572explaining}. Adversarial examples raise severe trustworthy issues and security concerns on the current machine learning systems especially in security-critical applications. Various methods \citep{kurakin2016adversarial,madry2017towards, zhang2019theoretically,wang2019convergence,wang2020improving} have been proposed to defend against the threats posed by adversarial examples. One of the notable approaches is adversarial training \citep{madry2017towards}. Specifically, adversarial training solves the following min-max optimization problem,
\begin{align*}
   \min_{\btheta \in \bTheta} \frac{1}{n} \sum_{i=1}^n \max_{\xb_i' \in \cB^p_{\epsilon}(\xb_i)} \ell(f_{\btheta}(\xb_i'), y_i),
\end{align*}
where $\{(\xb_i, y_i)\}_{i=1}^n$ is the training set and $\cB^p_{\epsilon}(\xb_i) = \{\xb: \|\xb - \xb_i\|_p \leq \epsilon\}$ denotes the $\epsilon$-ball around $\xb_i$ in $\ell_p$ norm ($p \geq 1$).
Many empirical or theoretical studies have been conducted trying to analyze or further improve adversarial training robustness \citep{zhang2019theoretically,rice2020overfitting, wang2020improving,carmon2019unlabeled,wang2019convergence,raghunathan2020understanding}. A recent work \citep{sanyal2021how} also pointed out that normally trained interpolators with the presence of label noise are unlikely to be adversarially robust, while adversarially robust classifiers cannot overfit noisy labels under certain conditions. However, it is still not clear whether the benign overfitting phenomenon occurs for extremely over-parameterized models in the presence of adversarial examples. 


In this paper, we show that benign overfitting indeed occurs in adversarial training. 
In order to properly characterize the benign overfitting phenomenon on adversarial training, we also define the population adversarial risk, which is the counterpart for population risk in standard training scenario:
\begin{align*}
    \PP_{(\xb, y)\sim \cD}\big[ \exists \xb' \in \cB^p_{\epsilon}(\xb) \ s.t., \ f_{\btheta}(\xb') \neq y \big].
\end{align*}
The adversarial risk measures the misclassification rate of the target classifier under the presence of $\ell_p$-norm adversarial perturbations. It is easy to observe that the adversarial risk is always larger than standard risk as it requires the classifier to correctly classify the data examples within the entire local $\ell_p$ norm ball.

We summarize our contributions of this paper in the following
\begin{itemize}[leftmargin = 2em]
    \item We show that the benign overfitting phenomenon can occur in adversarially robust linear classifiers with sufficient over-parameterization. Specifically, under moderate $\ell_p$ norm perturbations, adversarially trained linear classifiers can achieve the near-optimal standard and adversarial risks, in spite of overfitting the noisy training data. 
    
    \item When the perturbation strength $\epsilon$ is set to be $0$, our adversarial risk bound reduces to the standard one. The resulting standard risk bound extends \cite{chatterji2020finite}'s risk bound to further characterize the behavior of the linear classifier trained by $t$-step gradient descent. 
    
    \item We show that depending on the value of $p$ (perturbation norm), the adversarial risk bound can be different. The higher value of $p$ (typically for $p \geq 2$ case) actually leads to a larger gap between the adversarial risk and the standard risk with the same $\epsilon$. 
\end{itemize}

\textbf{Notation.} we use lower case letters to denote scalars and lower case bold face letters to denote vectors. For a vector $\xb \in \RR^d$, we denote its $\ell_p$ norm ($p\geq 1$) of $\xb$ by $\| \xb \|_p = \big(\sum_{i=1}^d |x_i|^p\big)^{1/p}$, the $\ell_\infty$ norm of $\xb$ by $ \|\xb\|_\infty = \max_{i=1}^d |x_i|$. We denote $\xb^{\circ p}$ as the element-wise $p$-power of $\xb$. For $p\geq 1$, we denote $\cB^p_{r}(\xb)$  as the $\ell_p$ norm ball of radius $r$ centered at $\xb$.
Given two sequences $\{a_n\}$ and $\{b_n\}$, we write $a_n = O(b_n)$ if there exists a constant $0 < C < +\infty$ such that $a_n \leq C\, b_n$. We denote $a_n = \Omega(b_n)$ if $b_n = O(a_n)$. We denote $a_n = \Theta(b_n)$ if $a_n = O(b_n)$ and $a_n = \Omega(b_n)$.


\section{Related Work}\label{sec:related}
There exists a large body of works on adversarial training, implicit bias and benign overfitting. In this section, we review the most relevant works with ours.

\textbf{Adversarial Training.} Adversarial training \citep{madry2017towards} and its variants \citep{zhang2019theoretically,wang2019convergence,wang2020improving} are currently the most effective type of approaches to empirically defend against adversarial examples \citep{szegedy2013intriguing, goodfellow6572explaining}. And many attempts have been made to understand its empirical success.
\cite{charles2019convergence,Li2020Implicit} showed that the adversarially trained linear classifier directionally converges to the maximum margin classifier. 
\cite{gao2019convergence,zhang2020over} showed that adversarial training with neural networks can achieve low robust training loss. Yet these conclusions cannot explain the test (population) performances.
Another line of research focuses on the generalization performance of adversarial training and the number of training samples. \cite{schmidt2018adversarially} showed that adversarial models require more data than standard models to achieve certain test accuracy.
\cite{chen2020more} showed that more data may actually
increase the gap between the generalization error of adversarially-trained models and standard models.
\cite{yin2019rademacher,cullina2018pac} studied the adversarial Rademacher complexity and VC-dimensions. 
Some other works focus on the trade-off between robustness and natural accuracy \citep{zhang2019theoretically,tsipras2018robustness,wu2020wider,raghunathan2020understanding,yang2020closer,dobriban2020provable,javanmard2020precise}, adversarial model complexity lower bound \citep{allen2020feature}, as well as the provable robustness upper bound \citep{fawzi2018adversarial,zhang2020understanding}.

Recently, some works also focus on studying the learning of robust halfspaces and linear models.
\cite{montasser2020efficiently} studied the conditions on the adversarial perturbation sets under which halfspaces are robustly learnable in the presence of random label noise. 
\cite{diakonikolas2020complexity} studied the computational complexity of adversarially robust halfspaces under $\ell_p$ norm perturbations. \cite{zou2021provable} showed that adversarially trained halfspaces are provably robust with low robust classification error in the presence of noise. \cite{dan2020sharp} proposed an adversarial signal to noise ratio and studied the excess risk lower/upper bounds for learning Gaussian mixture models. \cite{taheri2020asymptotic,javanmard2020precise} studied adversarial learning of linear models on Gaussian mixture data where the data dimension and the number of training data points have a fixed ratio.

\textbf{Implicit Bias.} Several recent works studied the implicit bias of various training algorithms in over-parameterized models. \cite{soudry2018implicit} studied the implicit bias of gradient descent trained  on linearly separable data while \cite{ji2019implicit} studied the non-separable case.
\cite{gunasekar2018characterizing} studied the implicit bias of various optimization methods in linear regression and classification problems. \cite{ji2018gradient} studied the implicit bias for deep linear networks and \cite{arora2019implicit,gunasekar2018implicit} studied the implicit bias for matrix factorization. \cite{Lyu2020Gradient} studied the implicit regularization of homogeneous neural networks with exponential loss and logistic loss.

\textbf{Benign Overfitting and Double Descent.} A series of recent works have studied the ``benign overfitting'' phenomenon \cite{bartlett2020benign} that when training over-parameterized models, classifiers can still achieve good population risk even when overfitting the noisy training data. \cite{bartlett2020benign,tsigler2020benign} studied the risk bounds for over-parameterized linear (ridge) regression and showed that under certain settings, the interpolating linear model with minimum parameter norm can have asymptotically optimal risk.  \cite{chatterji2020finite,cao2021risk,wang2021benign} studied the risk bounds in linear logistic regression and linear support vector machines. \cite{belkin2018understand,belkin2019reconciling,belkin2019two,hastie2019surprises,wu2020optimal} further quantified the dependency curve between the population risk and the degree of over-parameterization and showed that the curve has a double-descent shape. 

\section{Problem Setting and Preliminaries}\label{sec:settings}
We consider a sub-Gaussian mixture data generation model in our work. 
Specifically, the clean data $(\tilde\xb, \tilde y) \sim \tilde\cD$ is generated such that, for each data point $(\tilde\xb, \tilde y) \in \RR^d \times  \{\pm 1\}$, we have $\tilde y \sim \text{Unif}(\{\pm 1\})$ and $\tilde \xb = \tilde y\bmu + \bxi$ where $\bxi \in \RR^d$ and $\xi_1, \xi_2, \ldots, \xi_d$ are i.i.d. zero-mean sub-Gaussian variables with sub-Gaussian norm at most $1$.
The actual data examples are sampled from a noisy distribution $\cD$ which is close to the clean distribution $\tilde\cD$. Specifically, $\cD$ can be any distribution over $\RR^d \times \{\pm 1\}$ who has the same marginal distribution on $\RR^d$ and the total variation distance $d_{\text{TV}}(\cD,\tilde\cD) \leq \eta$ where $\eta$ denotes the noise level.

Note that our data generation model is standard for studying the population risk of over-parameterized linear classification. In fact, it is exactly the same as the one studied in \cite{chatterji2020finite}. In this model, following standard coupling lemma \citep{lindvall2002lectures}, there always exists a joint distribution on original data and noisy data $((\tilde\xb, \tilde y),(\xb, y))$ such that the marginal distribution for $(\tilde\xb, \tilde y)$ is $\tilde\cD$, the marginal distribution for $(\xb, y)$ is $\cD$, $\PP[\xb = \tilde\xb] = 1$ and $\PP[y \neq \tilde y] \leq \eta$. 

In this paper, we study the problem of robust binary classification with training data $\{(\xb_i, y_i)\}_{i=1}^n$ drawn i.i.d. from the distribution $\cD$.
Let's denote the ``clean'' sample index as $\cC:=\{k: y_k = \tilde y_k\}$ and the ``noisy'' sample index as $\cN:=\{k: y_k \neq \tilde y_k\}$.
We consider the adversarially trained linear classifier under exponential loss. In such case, the adversarial loss can be explicitly written as 
\begin{align}\label{eq:adv_loss}
    L(\btheta) = \sum_{i=1}^n \max_{\xb_i' \in \cB^p_{\epsilon}(\xb_i)} \exp(-y_i \btheta^\top \xb_i' ).
\end{align} 
In gradient descent adversarial training algorithm, the adversarial loss $L(\btheta)$ is minimized by first solving the inner maximization problem in \eqref{eq:adv_loss} with respect to the current model parameter $\btheta_{t-1}$ and then update the model parameter $\btheta_t$ by performing gradient descent to minimize the adversarial loss in each iteration. We summarized the training procedure for gradient descent adversarial training\footnote{Note that in practice people often initialize $\btheta_0$ by a small random vector (e.g., Xavier initialization \citep{glorot2010understanding}), while we follow \cite{Li2020Implicit} and set $\btheta_0=\zero$ for the ease of theoretical analysis.} in Algorithm \ref{alg:adv}. 
\setlength{\textfloatsep}{4pt}
\begin{algorithm}[t]
	\caption{Gradient Descent Adversarial Training}
	\label{alg:adv}
	\begin{algorithmic}[1]
		\STATE \textbf{input:} Training data $\{\xb_i, y_i\}_{i=1}^n$, 
		number of training iterations $T$, maximum perturbation strength $\epsilon$, training step sizes $\alpha_t$;
		\STATE initialize model parameter $\btheta_0 = \zero$
 		\FOR {$t = 1,\ldots, T$}
		      \FOR{each $\{\xb_i, y_i\}$}
		        \STATE $\xb_i' = \argmax_{\xb_i' \in \cB^p_{\epsilon}(\xb_i)} \exp(-y_i \btheta_{t-1}^\top \xb_i' ) $
		      \ENDFOR
		      \STATE $\btheta_t = \btheta_{t-1} - \alpha_t \cdot \nabla_{\btheta} L(\btheta_{t-1}) $
		\ENDFOR    
 	\end{algorithmic}
\end{algorithm}
Note that in the linear classifier setting, the inner maximization problem in \eqref{eq:adv_loss} has the following property
\begin{align}\label{eq:closeform}
    \argmax_{\xb_i' \in \cB^p_{\epsilon}(\xb_i)} \exp(-y_i \btheta^\top \xb_i' ) =\argmax_{ \ub_i \in \cB^p_{\epsilon}(\zero)} \exp(-y_i \btheta^\top (\xb_i + \ub_i) ) = \argmin_{\|\ub_i\|_p \leq \epsilon}  y_i \btheta^\top \ub_i.
\end{align}
By H\"{o}lders' inequality it is easy to observe that the optimal adversarial loss and the corresponding gradient can be written as 
\begin{align}
    &L(\btheta) = \sum_{i=1}^n \exp(-y_i \btheta^\top \xb_i + \epsilon \|\btheta\|_q), \nabla_{\btheta} L(\btheta) = -\sum_{i=1}^n (y_i \xb_i - \epsilon\cdot
    \partial\|\btheta\|_q)\exp(-y_i \btheta^\top \xb_i + \epsilon \|\btheta\|_q),\nonumber
\end{align}
where $1/p + 1/q =1$.
Also note that in the over-parameterized settings, training examples draw from our data generation model are linearly separable with high probability (See Lemma \ref{lemma:innerproduct_bound} in Section \ref{sec:proof}).
Linearly separable property ensures that the training samples have a positive margin (with high probability). Following \cite{Li2020Implicit}, we also define the standard and adversarial margin as
\begin{align}\label{eq:max-margin}
    \bar\gamma:=\max_{\|\btheta\|_q =1 } \min_{i\in[n]}  y_i \btheta^\top \xb_i, \quad 
    \gamma := \max_{\|\btheta\|_2 =1 } \min_{i\in[n]} \min_{\xb_i' \in \cB^p_{\epsilon}(\xb_i)}  y_i \btheta^\top \xb_i', 
\end{align}
which are useful in our later analysis. We also define the unique linear classifier $\theta$ that achieves adversarial margin $\gamma$ defined above as $\wb$.


\section{Main Results}\label{sec:theorem}
In this section, we study both the behavior of the population risk and the population adversarial risk for adversarially trained linear classifiers.

\begin{assumption}\label{assumption:radius}
The adversarial perturbation radius $\epsilon$ is upper bounded by a constant $R$ and is smaller than the $\ell_p$ data margin $\bar\gamma$, i.e., $\epsilon \leq   \min\{R, \bar\gamma  \}$. 
\end{assumption}
The goal of adversarial training is to obtain high-accuracy classifiers that are also robust to small input perturbations which can be ignored by human beings (e.g., small $\ell_\infty$-norm perturbations that are invisible to human eyes). Therefore, Assumption~\ref{assumption:radius} is reasonable by constraining the maximum allowable perturbation magnitude.

\begin{assumption}\label{assumption:noise_lowerbound}
The noise $\bxi$ in the data generation model satisfies that $\EE[\|\bxi\|_2^2] \geq \kappa d$ for some constant $\kappa$.
\end{assumption}
Assumption~\ref{assumption:noise_lowerbound} is a common condition that has also been considered in \cite{chatterji2020finite}. It ensures that the summation of the variances of the data input increases in the order of $\Theta(d)$. Clearly, this assumption covers the most common setting where the entries of $\xi$ are i.i.d. and have a variance larger than or equal to $\kappa$.

\begin{assumption}\label{assumption:gradient_descent}
The gradient descent starts at $\mathbf{0}$, and the step sizes are set as $\alpha_0 = 1/(Gdn)$, $\alpha_t = \alpha \leq 1/(GdnM)$ for 
$M = \max\{[2d+\epsilon (q-1) d^{\frac{3q-2}{2q-2}}/\gamma]\exp(-\gamma^2/(Gd) + \epsilon /G), 1\}$ and a constant $G$.
\end{assumption}
Assumption~\ref{assumption:gradient_descent} summarizes our assumptions about the gradient descent algorithm on the adversarial loss. The learning rate conditions here are to ensure the convergence of adversarial training, and is inspired by \cite{Li2020Implicit}.

We first present our theorem for standard risk of adversarial training method (Algorithm \ref{alg:adv}).
\begin{theorem}[Standard Risk of Adversarial Training]\label{theorem:poprisk}
For any $p \in [1,+\infty)$, suppose that Assumptions~\ref{assumption:radius}, \ref{assumption:noise_lowerbound} and \ref{assumption:gradient_descent} hold with $\kappa\in(0,1]$ and large enough constants $R$ and $G$. Moreover, for any $\delta \in (0,1)$, suppose the number of training samples $n \geq C\log(1/\delta)$, the dimension $d \geq C\cdot\max\{n\|\bmu\|_2^2, n^2 \log(n/\delta)\}$, the noise level $\eta < 1/C$,  and $\|\bmu\|_2^2 \geq C \max\{ \log(n/\delta), \epsilon \| \bmu \|_q \}$ for a large enough constant $C$. Then with probability at least $1 - \delta$, adversarially trained linear classifier $f_{\btheta_t}$ for sufficiently large $t$ under $\ell_p$-norm $\epsilon$-perturbation satisfies the following standard risk
\begin{align*}
    \PP_{(\xb, y)\sim \cD}[f_{\btheta_t}(\xb) \neq y]
   &\leq \eta + \exp\Bigg( -C' \bigg( \frac{\big(\|\bmu\|_2^2  - 4\epsilon\|\bmu\|_q  \big)}{  (C'' + \epsilon)\sqrt{d} } - \frac{C'''\|\bmu\|_2\log n}{\log t}\bigg)^2 \Bigg),
\end{align*}
where $C',C'',C''' > 0$ are absolute constants, $1/p + 1/q =1$.
\end{theorem}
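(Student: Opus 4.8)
The plan is to follow the two-stage strategy that is by now standard for benign-overfitting results—first reduce the population risk to a tail probability governed by the signal-to-noise ratio of the trained parameter, then control that ratio by tracking the gradient-descent dynamics—while isolating the new terms created by the adversarial regularizer $\epsilon\|\btheta\|_q$. First I would dispose of the label noise using the coupling described after the data model: since the marginals of $\cD$ and $\tilde\cD$ on $\RR^d$ coincide and $\PP[y\neq\tilde y]\le\eta$, we get $\PP_{\cD}[f_{\btheta_t}(\xb)\neq y]\le \eta+\PP_{\tilde\cD}[f_{\btheta_t}(\tilde\xb)\neq\tilde y]$, so it remains to bound the clean test error. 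For a clean test point $\tilde\xb=\tilde y\bmu+\bxi$ drawn independently of the training set, the classifier errs only if $\tilde y\btheta_t^\top\tilde\xb=\btheta_t^\top\bmu+\tilde y\,\btheta_t^\top\bxi<0$. Conditioning on $\btheta_t$ (a function of the training data alone) and using that $\tilde y\,\btheta_t^\top\bxi$ is mean-zero sub-Gaussian with parameter $O(\|\btheta_t\|_2)$, a one-sided sub-Gaussian tail bound gives
\begin{align*}
\PP_{\tilde\cD}[f_{\btheta_t}(\tilde\xb)\neq\tilde y]\le \exp\Bigl(-c\,\frac{(\btheta_t^\top\bmu)_+^2}{\|\btheta_t\|_2^2}\Bigr),
\end{align*}
so the whole theorem reduces to lower bounding the ratio $\btheta_t^\top\bmu/\|\btheta_t\|_2$ by the bracketed quantity in the statement.

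To control that ratio I would work on the high-probability event supplied by Lemma~\ref{lemma:innerproduct_bound}, which gives near-orthogonality and norm concentration of the noise vectors ($\|\bxi_i\|_2^2=\Theta(d)$, $|\langle\bxi_i,\bxi_j\rangle|\lesssim\sqrt{d\log(n/\delta)}$, $|\langle\bxi_i,\bmu\rangle|\lesssim\|\bmu\|_2\sqrt{\log(n/\delta)}$) together with linear separability; the over-parameterization assumption $d\gtrsim\max\{n\|\bmu\|_2^2,n^2\log(n/\delta)\}$ is exactly what makes these cross terms negligible. Writing the per-step increment $\btheta_t-\btheta_{t-1}=\alpha_t\sum_i g_{i,t}\bigl(y_i\xb_i-\epsilon\,\partial\|\btheta_{t-1}\|_q\bigr)$ with loss weights $g_{i,t}=\exp(-y_i\btheta_{t-1}^\top\xb_i+\epsilon\|\btheta_{t-1}\|_q)$, I would track $\btheta_t^\top\bmu$ and $\|\btheta_t\|_2$ separately. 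For the signal, the clean samples contribute $\approx\|\bmu\|_2^2$ per unit weight and the noisy samples $-\|\bmu\|_2^2$; the noise--signal cross terms are dominated once $\|\bmu\|_2^2\gtrsim\log(n/\delta)$, and—this is the genuinely new piece—the adversarial term obeys $(\partial\|\btheta\|_q)^\top\bmu\le\|\bmu\|_q$ by H\"older (the $\ell_q$-subgradient has unit $\ell_p$ norm), so it costs at most $\epsilon\|\bmu\|_q$ per unit weight, producing the $\|\bmu\|_2^2-4\epsilon\|\bmu\|_q$ numerator. For the norm, near-orthogonality forces the noise components to add in quadrature, yielding $\|\btheta_t\|_2\lesssim(C''+\epsilon)\sqrt d\,\Lambda_t$ where $\Lambda_t=\sum_{s\le t}\alpha_s\sum_i g_{i,s}$ is the accumulated weight and the $\epsilon$ inflation comes from the regularizer; the signal scales like $(\|\bmu\|_2^2-4\epsilon\|\bmu\|_q)\Lambda_t$ up to an additive finite-time transient of order $\|\bmu\|_2\log n$, and dividing the two produces both the leading $\sqrt d$ term and the $\log t$-decaying correction (using $\Lambda_t=\Theta(\log t)$ for the exponential loss).

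The main obstacle, as in \cite{chatterji2020finite}, is proving that the loss weights $g_{i,t}$ stay balanced across samples throughout training—no single (especially noisy) example is allowed to dominate the gradient—since this is what keeps the accumulated update pointing along $\bmu$ and keeps the quadrature bound on $\|\btheta_t\|_2$ valid. The adversarial regularizer couples this balance argument nonlinearly through the $\btheta$-dependent subgradient $\partial\|\btheta_{t-1}\|_q$, so the inductive control of the $g_{i,t}$ and of the margin must be carried out jointly with the step-size schedule of Assumption~\ref{assumption:gradient_descent} (in the spirit of \cite{Li2020Implicit}) rather than invoked as a black box. Sending $\epsilon\to0$ collapses the regularizer and recovers the standard analysis, which is the reduction claimed in the paper's second contribution.
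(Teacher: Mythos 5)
Your proposal is correct and follows essentially the same route as the paper: couple away the label noise, apply a sub-Gaussian tail bound to reduce everything to a lower bound on $\btheta_t^\top\bmu/\|\btheta_t\|_2$, and then separately control the signal accumulation (with the adversarial subgradient handled by H\"older, costing $\epsilon\|\bmu\|_q$ per unit of loss weight) and the norm $\|\btheta_t\|_2$, with the loss-weight balance across samples as the key technical ingredient --- this is exactly the paper's Lemmas \ref{lemma:innerproduct_bound}--\ref{lemma:lastlemma}. The one place you diverge is the origin of the $\|\bmu\|_2\log n/\log t$ term: the paper first bounds $\bmu^\top\wb$ for the limiting max-$\ell_2$-adversarial-margin direction and then transfers back to finite $t$ via the directional convergence rate $1-\btheta_t^\top\wb/\|\btheta_t\|_2\le c_3\log n/\log t$ imported from Li et al., whereas you attribute it to a finite-time transient divided by the accumulated weight $\Lambda_t=\Theta(\log t)$ without pinning down where the $\|\bmu\|_2\log n$ numerator comes from; since directly dividing the signal bound $\btheta_t^\top\bmu\ge(\|\bmu\|_2^2/4-\epsilon\|\bmu\|_q)\Lambda_t$ by the norm bound $\|\btheta_t\|_2\le(\sqrt{c_0}+\epsilon)\sqrt{d}\,\Lambda_t$ already yields the leading term with no $\log t$ correction, this vagueness is harmless for the stated (weaker) bound. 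A minor further point: the paper controls $\|\btheta_t\|_2$ by the plain triangle inequality together with $\|\zb_k\|_2\le\sqrt{c_0 d}$ and $\big\|\partial\|\btheta\|_q\big\|_2\le\sqrt{d}$, not by adding noise components in quadrature.
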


\begin{remark}
Theorem \ref{theorem:poprisk} presents the standard risk of adversarial training under $\ell_p$ norm perturbations. Note that adversarially trained linear classifier enjoys a bounded population risk which decreases as the number of training iterations $t$ increases. Specifically, when $t \to \infty$, we have
\begin{align}\label{eq:converge_risk}
    \lim_{t \to \infty} \PP_{(\xb, y)\sim \cD}[f_{\btheta_t}(\xb) \neq y]
    &\leq \eta + \exp\Bigg( -C' \bigg( \frac{\big(\|\bmu\|_2^2  - 4\epsilon\|\bmu\|_q  \big)}{  (C'' + \epsilon)\sqrt{d} } \bigg)^2 \Bigg).
\end{align}
\end{remark}

\begin{remark}\label{remark:std}
For \eqref{eq:converge_risk}, consider the case when the sample size $n$ is fixed but dimension $d$ and $\|\bmu\|_2$ are growing, we discuss the conditions to reach minimum standard risk of noise level $\eta$. Note that when $1 \leq p \leq 2$ we have $q \geq 2$ and $\|\bmu\|_q \leq \|\bmu\|_2$. In this case, if $\|\bmu\|_2 = \Omega(d^{1/4})$, the standard risk will come close to the noise level $\eta$ when $d$ is sufficiently large.  When $p > 2$ and therefore $q < 2$, we have $\|\bmu\|_q \leq d^{1/q -1/2}\|\bmu\|_2$. In this case, if $\|\bmu\|_2 = \Omega(d^{1/4})$ and $\epsilon = O(\|\bmu\|_2/d^{1/q -1/2})$, the standard risk will come close to the noise level $\eta$ with sufficiently large $d$.
Note that our theorem condition also requires that $\|\bmu\|_2 = O(\sqrt{d})$. Therefore, in order to reach the standard risk of $\eta$, we need $\|\bmu\|_2 = \Theta(d^r)$ for some $r \in (1/4, 1/2]$. 
\end{remark}

\begin{remark}
Choosing $\epsilon =0$ will reduce to the standard training case. Specifically, if we set $\epsilon =0$ in \eqref{eq:converge_risk}, it reduces to the same conclusion as Theorem 3.1 in \cite{chatterji2020finite}. However, our result is more general, as it covers the setting of adversarial training and gives risk bounds for the linear model obtained with a finite number of gradient descent iterations.
\end{remark}


\begin{theorem}[Adversarial Risk of Adversarial Training]\label{theorem:popadvrisk}
For any $\delta \in (0,1)$, under the same conditions as in Theorem \ref{theorem:poprisk}, with probability at least $1- \delta$, the adversarially trained linear classifier $f_{\btheta_t}$ for sufficiently large $t$ under $\ell_p$-norm $\epsilon$-perturbation satisfies the following adversarial risk if $1 \leq p \leq 2$
\begin{align*}
    &\PP_{(\xb, y)\sim \cD}\big[ \exists \xb' \in \cB^p_{\epsilon}(\xb) \ s.t., \ f_{\btheta}(\xb') \neq y \big] \\
    &\qquad\leq \eta + \exp\Bigg( -C' \bigg( \frac{ \big(\|\bmu\|_2^2  - 4\epsilon\|\bmu\|_q  \big)}{ (C'' + \epsilon) \sqrt{d} } - \frac{C'''\|\bmu\|_2\log n}{\log t} - \epsilon \bigg)^2 \Bigg),
\end{align*}
and if $p > 2$, 
\begin{align*}
    &\PP_{(\xb, y)\sim \cD}\big[ \exists \xb' \in \cB^p_{\epsilon}(\xb) \ s.t., \ f_{\btheta}(\xb') \neq y \big] \\
    &\leq \eta + \exp\Bigg( -C' \bigg( \frac{ \big(\|\bmu\|_2^2  - 4\epsilon\|\bmu\|_q  \big)}{ (C'' + \epsilon) \sqrt{d} } - \frac{C'''\|\bmu\|_2\log n}{\log t} - \epsilon d^{\frac{1}{q} -\frac{1}{2}} \bigg)^2 \Bigg),
\end{align*}
where $C',C'',C'''>0$ are absolute constants, $1/p + 1/q =1$.
\end{theorem}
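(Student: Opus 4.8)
The plan is to reduce the adversarial risk to a standard-risk computation with a shifted decision threshold, and then reuse the directional control of $\btheta_t$ already established for Theorem~\ref{theorem:poprisk}. First I would rewrite the adversarial error event. For a fixed classifier $\btheta_t$ and a test pair $(\xb, y)$, the event $\{\exists\, \xb' \in \cB^p_\epsilon(\xb): f_{\btheta_t}(\xb') \neq y\}$ holds iff $\min_{\xb' \in \cB^p_\epsilon(\xb)} y\btheta_t^\top \xb' \leq 0$. By exactly the H\"older argument behind \eqref{eq:closeform}, this minimum equals $y\btheta_t^\top \xb - \epsilon\|\btheta_t\|_q$, so the adversarial error event is precisely $\{y\btheta_t^\top \xb \leq \epsilon\|\btheta_t\|_q\}$. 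Compared with the standard error event $\{y\btheta_t^\top \xb \leq 0\}$ analyzed in Theorem~\ref{theorem:poprisk}, the only change is that the threshold $0$ is replaced by $\epsilon\|\btheta_t\|_q$.

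Next I would pass from $\cD$ to the clean distribution $\tilde\cD$ at the cost of an additive $\eta$, using $d_{\mathrm{TV}}(\cD, \tilde\cD) \leq \eta$:
\begin{align*}
\PP_{(\xb,y)\sim\cD}\big[\exists \xb' \in \cB^p_\epsilon(\xb): f_{\btheta_t}(\xb') \neq y\big] \leq \eta + \PP_{(\xb,y)\sim\tilde\cD}\big[y\btheta_t^\top\xb \leq \epsilon\|\btheta_t\|_q\big].
\end{align*}
Under $\tilde\cD$ we have $\xb = y\bmu + \bxi$, hence $y\btheta_t^\top\xb = \btheta_t^\top\bmu + y\btheta_t^\top\bxi$, where $y\btheta_t^\top\bxi$ is a zero-mean sub-Gaussian variable with parameter of order $\|\btheta_t\|_2$ (a weighted sum of the independent sub-Gaussian coordinates, independent of $y$). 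A sub-Gaussian tail bound therefore gives, whenever $\btheta_t^\top\bmu - \epsilon\|\btheta_t\|_q > 0$,
\begin{align*}
\PP_{(\xb,y)\sim\tilde\cD}\big[y\btheta_t^\top\xb \leq \epsilon\|\btheta_t\|_q\big] \leq \exp\Bigg(-C'\bigg(\frac{\btheta_t^\top\bmu - \epsilon\|\btheta_t\|_q}{\|\btheta_t\|_2}\bigg)^2\Bigg).
\end{align*}

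It then remains to lower bound the ratio $(\btheta_t^\top\bmu - \epsilon\|\btheta_t\|_q)/\|\btheta_t\|_2 = \btheta_t^\top\bmu/\|\btheta_t\|_2 - \epsilon\|\btheta_t\|_q/\|\btheta_t\|_2$. The first summand is exactly the quantity controlled in the proof of Theorem~\ref{theorem:poprisk}, which furnishes the lower bound $\frac{\|\bmu\|_2^2 - 4\epsilon\|\bmu\|_q}{(C''+\epsilon)\sqrt d} - \frac{C'''\|\bmu\|_2\log n}{\log t}$. For the second summand I would invoke the elementary norm comparison between $\ell_q$ and $\ell_2$ on $\RR^d$: when $1 \leq p \leq 2$ we have $q \geq 2$ and $\|\btheta_t\|_q \leq \|\btheta_t\|_2$, so $\epsilon\|\btheta_t\|_q/\|\btheta_t\|_2 \leq \epsilon$; when $p > 2$ we have $q < 2$ and $\|\btheta_t\|_q \leq d^{1/q - 1/2}\|\btheta_t\|_2$, so $\epsilon\|\btheta_t\|_q/\|\btheta_t\|_2 \leq \epsilon d^{1/q - 1/2}$. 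Substituting each case into the displayed exponential bound yields the two claimed expressions with the same absolute constant $C'$.

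The main obstacle is not in this reduction, which is short, but in the fact that it rests entirely on the directional control of $\btheta_t$ established for Theorem~\ref{theorem:poprisk}: I reuse the same lower bound on $\btheta_t^\top\bmu/\|\btheta_t\|_2$ and, crucially, I must ensure the threshold shift does not flip the sign, i.e.\ that $\btheta_t^\top\bmu - \epsilon\|\btheta_t\|_q > 0$, so that squaring preserves the inequality direction. This positivity is what the hypotheses $\epsilon \leq \bar\gamma$ (Assumption~\ref{assumption:radius}) and $\|\bmu\|_2^2 \geq C\max\{\log(n/\delta), \epsilon\|\bmu\|_q\}$ guarantee for sufficiently large $t$; if it fails the stated bound is vacuous and nothing is lost. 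The genuinely delicate point, inherited from the standard-risk analysis, is keeping $\|\btheta_t\|_q$ and $\|\btheta_t\|_2$ simultaneously controlled along the gradient-descent trajectory, which is precisely where the $p$-dependence of the final bound enters.
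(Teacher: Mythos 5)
Your proposal is correct and follows essentially the same route as the paper: reduce the adversarial event to $\{\tilde y\,\btheta_t^\top\xb \le \epsilon\|\btheta_t\|_q\}$ via the H\"older argument, pay $\eta$ through the coupling with the clean distribution, apply the sub-Gaussian tail bound to get $\exp\big(-C'((\btheta_t^\top\bmu-\epsilon\|\btheta_t\|_q)/\|\btheta_t\|_2)^2\big)$, and then combine Lemma~\ref{lemma:lastlemma} with the $\ell_q$-vs-$\ell_2$ norm comparison split on $p\lessgtr 2$. Your explicit attention to the positivity of $\btheta_t^\top\bmu-\epsilon\|\btheta_t\|_q$ before squaring is a point the paper leaves implicit, but it does not change the argument.
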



\begin{remark}
Theorem \ref{theorem:popadvrisk} shows the adversarial risk of adversarial training under $\ell_p$ norm perturbations. The major difference from the standard risk (Theorem \ref{theorem:poprisk}) lies in the additional $\epsilon$ or $\epsilon d^{1/q-1/2}$ term in the exponential function. This aligns with common sense that adversarial risk should always be larger than the standard risk. 
This also suggests that for larger $p$-norm ($p > 2$) perturbation, the same magnitude of perturbation would lead to a larger gap between the adversarial risk and the standard risk.
In terms of the perturbation strength, we can also observe that with a larger $\epsilon$, adversarially trained classifiers obtain worse adversarial risk. This has been verified by many empirical observations of adversarial training \citep{madry2017towards, zhang2019theoretically}.
\end{remark}


\begin{remark}\label{remark:benign_adv}
Note that when $t \to \infty$, if $1 \leq p \leq 2$, we have the following adversarial risk bound:
\begin{align*}
    &\lim_{t \to \infty}\PP_{(\xb, y)\sim \cD}\big[ \exists \xb' \in \cB^p_{\epsilon}(\xb), f_{\btheta}(\xb') \neq y \big] \leq \eta + \exp\Bigg( -C' \bigg( \frac{ \big(\|\bmu\|_2^2  - 4\epsilon\|\bmu\|_q  \big)}{ (C'' + \epsilon) \sqrt{d} } - \epsilon \bigg)^2 \Bigg),
\end{align*}
and if $p > 2$, we have
\begin{align*}
    &\lim_{t \to \infty}\PP_{(\xb, y)\sim \cD}\big[ \exists \xb' \in \cB^p_{\epsilon}(\xb), f_{\btheta}(\xb') \neq y \big] \leq \eta + \exp\Bigg( -C' \bigg( \frac{ \big(\|\bmu\|_2^2  - 4\epsilon\|\bmu\|_q  \big)}{ (C'' + \epsilon) \sqrt{d} } - \epsilon d^{\frac{1}{q} -\frac{1}{2}} \bigg)^2 \Bigg).
\end{align*}
Similar to the standard risk case (Remark \ref{remark:std}), when $1 \leq p \leq 2$, if $\|\bmu\|_2 = \Theta(d^r)$ for some $r \in (1/4, 1/2]$, the adversarial risk will also come close to the noise level $\eta$ with sufficiently large $d$.  When $p > 2$, if we have $\|\bmu\|_2 = \Theta(d^r)$ for some $r \in (1/4, 1/2]$ and $\epsilon = O(\|\bmu\|_2/d^{1/q})$, the adversarial risk will be close to $\eta$ with sufficiently large $d$. Note that compared to the standard risk, this requirement on $\epsilon$ is slightly stronger.
\end{remark}

\begin{remark}
Note that our results imply a striking fact that unlike those observed in previous studies (e.g., \cite{rice2020overfitting} showed that overfitting leads to worse empirical robustness on real image distributions), overfitting in adversarial training can be benign for certain distributions. Specifically, Remark \ref{remark:benign_adv} shows that for linear models with sub-Gaussian mixture data, the overfitting effect is indeed benign. This is later empirically verified in the experiments for both linear and neural network models. 
\end{remark}

\section{Proof Outline of the Main Results}\label{sec:proof}
In this section, we present the proofs of our main theorems, which consists of three main steps.  

\noindent\textbf{Statistical properties of the training data points.} We first list some basic properties of the training data points based on our data model defined in Section~\ref{sec:settings}. 
\begin{lemma}[Lemma 4.7 in \cite{chatterji2020finite}]\label{lemma:innerproduct_bound}
Let $\zb_k=y_k\xb_k$. There exist absolute constants $R$, $\kappa$ and $G$ and $C$, such that if the assumptions in Theorem~\ref{theorem:poprisk} hold, then with probability at least $1 - \delta$, 
\begin{align}
    &\frac{d}{c_0} \leq \|\zb_k\|_2^2 \leq c_0 d \text{ for all } k \in [n], \label{eq:bound1}\\ 
    &|\zb_i^\top \zb_j| \leq c_0\big(\|\bmu\|_2^2 + \sqrt{d   \log(n/\delta)}\big) \text{ for all } i\neq j,  \label{eq:bound2}\\
    &|\bmu^\top \zb_k - \|\bmu\|_2^2| \leq \|\bmu\|_2^2/2 \text{ for all } k \in \cC, \label{eq:bound3}\\ 
    &|\bmu^\top \zb_k - (-\|\bmu\|_2^2)| \leq \|\bmu\|_2^2/2 \text{ for all } k \in \cN, \label{eq:bound4}
\end{align} 
the number of noisy samples $|\cN| \leq (\eta + c_1)n$, and all training samples are linearly separable,
where $c_0 > 1$ is an absolute constant. 
\end{lemma}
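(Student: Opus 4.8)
The plan is to reduce every claim to a concentration statement about the independent sub-Gaussian noise vectors $\bxi_1,\dots,\bxi_n$, exploiting the coupling that guarantees $\xb_k=\tilde y_k\bmu+\bxi_k$ almost surely and $\PP[y_k\neq\tilde y_k]\le\eta$. Writing $s_k:=y_k\tilde y_k$ (so $s_k=+1$ for $k\in\cC$ and $s_k=-1$ for $k\in\cN$), one has the decomposition $\zb_k=y_k\xb_k=s_k\bmu+y_k\bxi_k$, and since $y_k\in\{\pm1\}$ the sign factors drop out wherever an absolute value or a squared norm appears. Because the data model here is identical to the one in \cite{chatterji2020finite}, the statement is exactly their Lemma 4.7; below I sketch the self-contained argument one would run.

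For \eqref{eq:bound1} I would expand $\|\zb_k\|_2^2=\|\bmu\|_2^2+2s_k\bmu^\top(y_k\bxi_k)+\|\bxi_k\|_2^2$. The term $\|\bxi_k\|_2^2$ is a sum of $d$ independent sub-exponential variables, so Bernstein's inequality gives $\|\bxi_k\|_2^2=\Theta(d)$ with high probability, the lower bound coming from Assumption~\ref{assumption:noise_lowerbound} ($\EE\|\bxi\|_2^2\ge\kappa d$) and the upper bound from the unit sub-Gaussian norm. The cross term is sub-Gaussian with parameter $\|\bmu\|_2$, hence $O(\|\bmu\|_2\sqrt{\log(n/\delta)})$, and together with $\|\bmu\|_2^2\le d/(Cn)=o(d)$ this is lower order, so the $\Theta(d)$ scaling survives. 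For \eqref{eq:bound3} and \eqref{eq:bound4} I note $\bmu^\top\zb_k=s_k\|\bmu\|_2^2+\bmu^\top(y_k\bxi_k)$, so the deviation from the center is exactly $|\bmu^\top\bxi_k|$, which a sub-Gaussian tail bound controls by $O(\|\bmu\|_2\sqrt{\log(n/\delta)})\le\|\bmu\|_2^2/2$ using $\|\bmu\|_2^2\ge C\log(n/\delta)$; the sign $s_k$ automatically places the center at $+\|\bmu\|_2^2$ on $\cC$ and $-\|\bmu\|_2^2$ on $\cN$.

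For the off-diagonal bound \eqref{eq:bound2}, expanding $\zb_i^\top\zb_j$ produces the deterministic term $s_is_j\|\bmu\|_2^2$, two cross terms bounded as above, and the key term $\bxi_i^\top\bxi_j$ with $i\ne j$. Since $\bxi_i$ and $\bxi_j$ are independent, conditioning on $\bxi_j$ makes $\bxi_i^\top\bxi_j$ sub-Gaussian with parameter $\|\bxi_j\|_2=O(\sqrt d)$, so a Hanson--Wright-type estimate yields $|\bxi_i^\top\bxi_j|=O(\sqrt{d\log(n/\delta)})$, and a union bound over the $\binom n2$ pairs costs only the $\log(n/\delta)$ already present. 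The count $|\cN|=\sum_k\mathbf 1[y_k\ne\tilde y_k]$ is a sum of independent $\mathrm{Bernoulli}(\le\eta)$ variables, so a Chernoff bound with $n\ge C\log(1/\delta)$ gives $|\cN|\le(\eta+c_1)n$. Finally, linear separability follows by exhibiting the explicit classifier $\btheta=\sum_{k}\zb_k$: then $y_k\btheta^\top\xb_k=\btheta^\top\zb_k=\|\zb_k\|_2^2+\sum_{j\ne k}\zb_j^\top\zb_k\ge d/c_0-n\,c_0(\|\bmu\|_2^2+\sqrt{d\log(n/\delta)})$, which is strictly positive once the conditions $d\ge C\max\{n\|\bmu\|_2^2,n^2\log(n/\delta)\}$ force the off-diagonal mass to be $o(d)$.

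The main obstacle is the off-diagonal concentration in \eqref{eq:bound2}: obtaining the sharp $\sqrt{d\log(n/\delta)}$ rate for inner products of independent sub-Gaussian vectors, and then threading that rate (rather than a cruder $d$-scale bound) through the diagonal-dominance argument, is precisely what the over-parameterization conditions $d\ge Cn^2\log(n/\delta)$ and $d\ge Cn\|\bmu\|_2^2$ are calibrated to absorb. Everything else is routine sub-Gaussian and sub-exponential tail estimation followed by a union bound over $k\in[n]$ and over pairs $i\ne j$.
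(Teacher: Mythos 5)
Your proposal is correct and matches the paper's treatment: the paper does not reprove this lemma but imports it verbatim as Lemma 4.7 of \cite{chatterji2020finite}, and your sketch is precisely the standard argument behind that result (decompose $\zb_k=s_k\bmu+y_k\bxi_k$, apply Bernstein to $\|\bxi_k\|_2^2$, sub-Gaussian tails to the $\bmu^\top\bxi_k$ and $\bxi_i^\top\bxi_j$ cross terms, Chernoff to $|\cN|$, and diagonal dominance of $\btheta=\sum_k\zb_k$ for separability). All the stated over-parameterization conditions are used exactly where you place them, so no gap remains.
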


Lemma \ref{lemma:innerproduct_bound} directly follows Lemma 4.7 in \cite{chatterji2020finite}. It provides direct high probability bounds for $\|\zb_k\|_2$ and $\bmu^\top\zb_k$ and also suggests that $\zb_k$ vectors are nearly pairwise orthogonal in over-parameterized settings. It also guarantees that training examples are linearly separable with high probability.

\noindent\textbf{Landscape properties of the training objective function.} Given the properties of the training data points, we  proceed to establish landscape properties of the objective function $ L(\btheta_1)$. 
The following lemma bound the loss for the adversarially trained classifier in step $1$.


\begin{lemma}\label{lemma:bound_loss}[Theorem 3.4 in \cite{Li2020Implicit}]
Under the same conditions as in Theorem~\ref{theorem:poprisk}, 
with probability at least $1 - \delta$, we have $L(\btheta_1) \leq 2n$, and 
\begin{align}
    &L(\btheta_{t+1}) \leq L(\btheta_{t}), \label{eq:tuo1}\\
    &1 - \frac{\btheta_t^\top\wb}{\|\btheta_t\|_2} \leq \frac{c_3\log n}{\log t}\label{eq:tuo2}
\end{align}
for all $t > 0$, where $c_3$ is an absolute constant.
\end{lemma}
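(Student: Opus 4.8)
The statement bundles three facts about the gradient-descent trajectory: the first-step bound $L(\btheta_1)\le 2n$, the monotone decrease \eqref{eq:tuo1}, and the directional convergence of $\btheta_t/\|\btheta_t\|_2$ to the adversarial max-margin direction $\wb$ at rate $\log n/\log t$ in \eqref{eq:tuo2}. The plan is to treat them in this order, since the first two supply the descent structure the third exploits, following the implicit-bias template of \cite{soudry2018implicit,ji2019implicit} adapted to the adversarial (min--max) exponential loss as in \cite{Li2020Implicit}. First I would record that $L$ is convex and locally smooth: convexity is immediate because the exponent $-y_i\btheta^\top\xb_i+\epsilon\|\btheta\|_q$ is linear plus a norm, and $\exp(\cdot)$ is convex nondecreasing. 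For smoothness I would bound the Hessian (where $\|\cdot\|_q$ is differentiable) by a self-bounding estimate $\nabla^2 L(\btheta)\preceq \beta\,L(\btheta)\,\mathbf{I}$ with $\beta=O\big(d+\epsilon(q-1)d^{(3q-2)/(2q-2)}/\gamma\big)$, where $d$ enters through $\|\xb_i\|_2^2\le c_0 d$ from Lemma~\ref{lemma:innerproduct_bound} and the fractional power of $d$ through the $\ell_2$-curvature of the $\ell_q$ penalty; this $\beta$ is exactly what is hidden in $M$ of Assumption~\ref{assumption:gradient_descent}. The descent lemma then gives $L(\btheta_{t+1})\le L(\btheta_t)-\tfrac{\alpha}{2}\|\nabla L(\btheta_t)\|_2^2\le L(\btheta_t)$ once $\alpha\le 1/(GdnM)$, proving \eqref{eq:tuo1}; the base case follows from the same estimate applied once from $\btheta_0=\zero$, where $L(\btheta_0)=n$, using the larger first step $\alpha_0=1/(Gdn)$ and handling the non-differentiability of $\|\cdot\|_q$ at the origin through the explicit one-step subgradient update.

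The engine for \eqref{eq:tuo2} is a gradient--margin inequality. Writing the per-sample adversarial margin $u_i^{(t)}=y_i\btheta_t^\top\xb_i-\epsilon\|\btheta_t\|_q$, H\"older's inequality gives $(\partial\|\btheta_t\|_q)^\top\wb\le\|\wb\|_q$, so each term of $-\nabla L(\btheta_t)^\top\wb$ is at least $(y_i\wb^\top\xb_i-\epsilon\|\wb\|_q)\exp(-u_i^{(t)})\ge\gamma\exp(-u_i^{(t)})$ by the definition of the adversarial margin $\gamma$ achieved by $\wb$. Summing yields
\begin{align*}
-\nabla L(\btheta_t)^\top\wb\ \ge\ \gamma\,L(\btheta_t).
\end{align*}
Combined with the descent inequality this forces $L(\btheta_t)\to 0$ at a quantified $O(\mathrm{poly}\log t/t)$ rate, hence $-\log L(\btheta_t)=\Omega(\log t)$; since $L(\btheta_t)\ge\exp(-\min_i u_i^{(t)})$ this gives $\min_i u_i^{(t)}=\Omega(\log t)$ and, via the bound $\min_i u_i^{(t)}\le\gamma\|\btheta_t\|_2$ (evaluate the margin definition at the unit vector $\btheta_t/\|\btheta_t\|_2$ and rescale), the norm growth $\|\btheta_t\|_2=\Omega(\log t/\gamma)$.

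To obtain \eqref{eq:tuo2} I would then track the two scalars $\btheta_t^\top\wb$ and $\|\btheta_t\|_2$. Telescoping the gradient--margin inequality along the update gives the lower bound $\btheta_t^\top\wb\ge\gamma\sum_{s<t}\alpha_s L(\btheta_s)$, while an upper bound on the norm comes from $\tfrac12\|\btheta_{t+1}\|_2^2-\tfrac12\|\btheta_t\|_2^2=-\alpha\,\btheta_t^\top\nabla L(\btheta_t)+\tfrac{\alpha^2}{2}\|\nabla L(\btheta_t)\|_2^2$ together with $-\btheta_t^\top\nabla L(\btheta_t)=\sum_i u_i^{(t)}\exp(-u_i^{(t)})$. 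Comparing numerator and denominator, and using $-\log L(\btheta_t)=\Theta(\log t)$ while the initial loss contributes the scale $\log n=\log L(\btheta_0)$, yields $1-\btheta_t^\top\wb/\|\btheta_t\|_2\le c_3\log n/\log t$.

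I expect this last step to be the main obstacle. The clean lower bound on $\btheta_t^\top\wb$ together with the crude estimate $\|\nabla L\|_2\lesssim\sqrt{d}\,L$ only gives alignment of order $\gamma/\sqrt d$, far from $1$; matching the numerator and denominator growth to leading order — that is, showing essentially all of the norm increase of $\btheta_t$ occurs along $\wb$ — requires the finer dynamical control of Ji--Telgarsky and \cite{Li2020Implicit}, and this is precisely where the $\log n/\log t$ rate and the constant $c_3$ get pinned down. Two recurring technical nuisances feed into every step: the non-smoothness of $\|\cdot\|_q$, which I would handle by fixing a subgradient and using $(\partial\|\btheta\|_q)^\top\btheta=\|\btheta\|_q$ and $\|\partial\|\btheta\|_q\|_p\le1$; and the explicit dimension dependence entering $\beta$, and hence the admissible step size and the constants propagated through the argument.
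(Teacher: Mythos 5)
Your proposal and the paper's proof diverge fundamentally in strategy: the paper does \emph{not} prove \eqref{eq:tuo1} and \eqref{eq:tuo2} from first principles. It proves only the bound $L(\btheta_1)\le 2n$ directly (by writing $\btheta_1=\alpha_0\sum_{k=1}^n\zb_k$ and bounding each exponent via Lemma~\ref{lemma:innerproduct_bound} and the smallness of $\alpha_0$), and then reduces everything else to \cite{Li2020Implicit} by a rescaling $\tilde\xb_i=\xb_i/B$, $\bbeta_t=B\btheta_t$, $\eta_t=nB^2\alpha_t$, $\tilde\epsilon=\epsilon/B$ with $B=\sqrt{c_0d}$, after which \eqref{eq:tuo1} and \eqref{eq:tuo2} are quoted as Theorems~3.3 and 3.4 of that reference. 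Your plan instead tries to rebuild the implicit-bias machinery itself, and it contains a genuine gap at precisely the point that constitutes the content of the lemma: the rate $1-\btheta_t^\top\wb/\|\btheta_t\|_2\le c_3\log n/\log t$. Your own sketch concedes that the gradient--margin inequality $-\nabla L(\btheta_t)^\top\wb\ge\gamma L(\btheta_t)$ combined with the crude bound $\|\nabla L\|_2\lesssim\sqrt d\,L$ only yields alignment of order $\gamma/\sqrt d$, and that closing the gap ``requires the finer dynamical control of Ji--Telgarsky and \cite{Li2020Implicit}.'' Since showing that essentially all of the norm growth of $\btheta_t$ accrues along $\wb$ (equivalently, that the orthogonal component stays $O(\log n)$ while $\|\btheta_t\|_2=\Omega(\log t)$) is exactly what \eqref{eq:tuo2} asserts, deferring it means the proposal does not prove the lemma; it only reproduces the standard preliminaries (convexity, self-bounded smoothness, loss decay, norm growth) that any such argument starts from.

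Two smaller points. First, your treatment of the base case is off: you cannot get $L(\btheta_1)\le 2n$ from ``the same descent estimate,'' because $\alpha_0=1/(Gdn)$ is a factor $M$ larger than the step size for which the descent lemma is justified, and indeed the loss is \emph{allowed to increase} from $L(\btheta_0)=n$ to $2n$ at this step --- the paper handles this by explicitly computing $\btheta_1$ and showing each exponent is at most $1/16$, using $d\ge Cn\|\bmu\|_2^2$ and $\epsilon\le R$. Second, your reduction of the problem to reproving \cite{Li2020Implicit} misses that their results are stated for $\|\xb_i\|_2\le1$, so even the citation route requires the rescaling bookkeeping (in particular $\tilde\gamma=\bar\gamma/B$ and the induced step-size condition), which is where Assumption~\ref{assumption:gradient_descent}'s constant $M$ actually gets used in the paper.
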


By Lemma \ref{lemma:bound_loss}, one can easily observe that the adversarial training loss is bounded by $2n$ along the entire training trajectory. Lemma \ref{lemma:bound_loss} also suggests that when $t \to \infty$, the adversarially trained classifier $\btheta_t$ will converge in direction to the max adversarial margin classifier $\wb$ defined in \eqref{eq:max-margin}.

\noindent\textbf{Length and direction of the adversarial training iterates $\btheta_t$.} We also establish properties of the adversarial training iterates $\btheta_t$. We have the following lemmas.
\begin{lemma}\label{lemma:bound_vt}
Under the same conditions as in Theorem \ref{theorem:poprisk}, for all adversarial training iteration $t>0$, with probability at least $1 - \delta$, we have
$
    \|\btheta_{t+1}\|_2 \leq  (\sqrt{c_0} + \epsilon)\sqrt{d} \sum_{m=0}^t \alpha_m  L(\btheta_m),
$
where $c_0$ is the absolute constant in Lemma \ref{lemma:innerproduct_bound}.
\end{lemma}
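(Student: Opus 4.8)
The plan is to unroll the gradient-descent recursion into a telescoping sum and then bound each gradient in $\ell_2$ norm. Since $\btheta_0=\zero$, repeatedly applying the update $\btheta_{m+1}=\btheta_m-\alpha_m\nabla_\btheta L(\btheta_m)$ gives $\btheta_{t+1}=-\sum_{m=0}^t \alpha_m\nabla_\btheta L(\btheta_m)$, so by the triangle inequality $\|\btheta_{t+1}\|_2\le \sum_{m=0}^t \alpha_m\|\nabla_\btheta L(\btheta_m)\|_2$. Thus it suffices to prove the per-step bound $\|\nabla_\btheta L(\btheta)\|_2\le (\sqrt{c_0}+\epsilon)\sqrt{d}\,L(\btheta)$ uniformly in $\btheta$, after which summing against the $\alpha_m$ immediately yields the claim.

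For the per-step bound I would use the closed form of the gradient derived in Section~\ref{sec:settings}. Writing $\zb_i=y_i\xb_i$ and the positive per-sample weights $g_i(\btheta)=\exp(-y_i\btheta^\top\xb_i+\epsilon\|\btheta\|_q)>0$, the gradient is $-\nabla_\btheta L(\btheta)=\sum_{i=1}^n(\zb_i-\epsilon\bm{v})\,g_i(\btheta)$, where $\bm{v}\in\partial\|\btheta\|_q$. Applying the triangle inequality and factoring out the positive scalars $g_i(\btheta)$,
\[
\|\nabla_\btheta L(\btheta)\|_2 \le \sum_{i=1}^n \|\zb_i-\epsilon\bm{v}\|_2\, g_i(\btheta) \le \Big(\max_{i\in[n]}\|\zb_i\|_2 + \epsilon\|\bm{v}\|_2\Big)\sum_{i=1}^n g_i(\btheta).
\]
On the high-probability event of Lemma~\ref{lemma:innerproduct_bound}, bound~\eqref{eq:bound1} gives $\max_i\|\zb_i\|_2\le\sqrt{c_0 d}$, and by definition $\sum_i g_i(\btheta)=L(\btheta)$. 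It therefore remains only to control $\|\bm{v}\|_2$ and show it is at most $\sqrt{d}$, which closes the argument since then the prefactor is $(\sqrt{c_0}+\epsilon)\sqrt{d}$.

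The one genuinely norm-geometric step — and the only place the value of $p$ enters — is bounding the $\ell_2$ norm of the $\ell_q$-subgradient. The defining property of a norm subgradient is that at $\btheta\neq\zero$ it is a unit vector in the dual norm, and the dual of $\ell_q$ is $\ell_p$ (since $1/p+1/q=1$), so $\|\bm{v}\|_p=1$; at $\btheta=\zero$ any admissible choice satisfies $\|\bm{v}\|_p\le 1$. Converting to $\ell_2$ via the standard embedding inequalities, when $1\le p\le 2$ we have $\|\bm{v}\|_2\le\|\bm{v}\|_p\le 1$, and when $p>2$ we have $\|\bm{v}\|_2\le d^{1/2-1/p}\|\bm{v}\|_p\le d^{1/2-1/p}$. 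In both regimes $\|\bm{v}\|_2\le\sqrt{d}$ (using $d\ge 1$ and $1/2-1/p<1/2$), which feeds back to give $\|\nabla_\btheta L(\btheta)\|_2\le(\sqrt{c_0}+\epsilon)\sqrt{d}\,L(\btheta)$ and, after summation, the stated bound. Note the probability $1-\delta$ is inherited entirely from Lemma~\ref{lemma:innerproduct_bound}; everything else is deterministic. I expect the subgradient norm conversion to be the only delicate point, chiefly because the $p>2$ case must be checked to confirm it never produces a factor worse than $\sqrt{d}$.
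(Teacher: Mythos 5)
Your proof is correct and follows essentially the same route as the paper's: unroll the recursion from $\btheta_0=\zero$, apply the triangle inequality, and bound each gradient by $(\sqrt{c_0}+\epsilon)\sqrt{d}\,L(\btheta_m)$ using $\|\zb_k\|_2\le\sqrt{c_0 d}$ from Lemma~\ref{lemma:innerproduct_bound} and $\|\partial\|\btheta\|_q\|_2\le\sqrt{d}$. The only cosmetic difference is that you justify the subgradient bound via the dual-norm characterization ($\|\bm{v}\|_p=1$ plus the $\ell_p$-to-$\ell_2$ embedding), whereas the paper computes the subgradient explicitly in Lemma~\ref{lemma:partial_norm_bound}; both yield the same $\sqrt{d}$ factor.
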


Lemma \ref{lemma:bound_vt} upper bound the $L_2$ norm of adversarially trained classifier $\btheta_t$ by the summation of training losses along the training trajectory.

\begin{lemma}\label{lemma:loss_range}
Let $\zb_k = y_k\xb_k$, under the same conditions as in Theorem \ref{theorem:poprisk}, for all adversarial training iteration $t \geq 0$, with probability as least $1- \delta$, we have
$
     \max_{k=1}^n \exp(- \btheta_t^\top \zb_k)  \leq c_3 \min_{k=1}^n \exp(- \btheta_t^\top \zb_k),
$
where $c_3>0$ is an absolute constant. 
\end{lemma}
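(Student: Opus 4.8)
The plan is to reduce the claim to a statement about the \emph{spread} of the quantities $g_k(t) := \btheta_t^\top\zb_k$. Writing $\ell_k(\btheta) = \exp(-\btheta^\top\zb_k + \epsilon\|\btheta\|_q)$ for the per-example adversarial loss, observe that the common factor $\exp(\epsilon\|\btheta_t\|_q)$ cancels in the ratio $\max_k\exp(-\btheta_t^\top\zb_k)\,/\,\min_k\exp(-\btheta_t^\top\zb_k)$, so the assertion is equivalent to $\max_k g_k(t) - \min_k g_k(t)\leq\log c_3$ for a suitable absolute constant $c_3$. I would prove this invariant by induction on $t$.

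Using the explicit gradient $\nabla L(\btheta) = -\sum_i(\zb_i - \epsilon\,\partial\|\btheta\|_q)\ell_i(\btheta)$ recorded in the excerpt, a direct computation gives
\[
g_k(t+1) - g_k(t) = \alpha_{t+1}\Big[\|\zb_k\|_2^2\,\ell_k(\btheta_t) + \sum_{i\neq k}\zb_i^\top\zb_k\,\ell_i(\btheta_t) - \epsilon\,L(\btheta_t)\,(\partial\|\btheta_t\|_q)^\top\zb_k\Big],
\]
where the first term is the \emph{self-term} and the remaining two form an error $E_k(t)$. The self-term drives the mechanism: by Lemma~\ref{lemma:innerproduct_bound} it satisfies $\|\zb_k\|_2^2\in[d/c_0, c_0 d]$ and is proportional to the example's own loss, so the example currently carrying the largest loss receives the largest positive increment to $g_k$, i.e. its loss is pushed down the fastest. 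This is the self-correcting effect that keeps the per-example losses comparable.

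For the base case, $\btheta_0=\zero$ makes all $g_k(0)$ coincide, and after the first (slightly larger) step $\alpha_0 = 1/(Gdn)$ one checks the spread is $O(1/(Gn))$, comfortably inside the band. For the inductive step, assume the spread is $\leq\log c_3$ at time $t$ and take any pair $j,k$ with $g_j(t)\geq g_k(t)$, so that example $k$ carries the larger loss and $\ell_k(\btheta_t)/\ell_j(\btheta_t)=e^{g_j(t)-g_k(t)}$. I would bound $g_j(t+1)-g_k(t+1)$ by two complementary estimates: (i) the learning-rate condition in Assumption~\ref{assumption:gradient_descent}, combined with $L(\btheta_t)\leq 2n$ from Lemma~\ref{lemma:bound_loss}, makes every single-step increment $o(1)$, so the band cannot be jumped across in one step; and (ii) when the gap is near its edge $\log c_3$ (so $\ell_k\approx c_3\ell_j$), the self-term difference obeys $\alpha_{t+1}\big(\|\zb_k\|_2^2\ell_k - \|\zb_j\|_2^2\ell_j\big)\geq\alpha_{t+1}\,\ell_j\,d(c_3/c_0 - c_0)$, which is strictly positive once $c_3 > c_0^2$ and provides the negative feedback cancelling any residual outward drift. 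The cross-correlation part of $E_k-E_j$ is controlled by the near-orthogonality bound $|\zb_i^\top\zb_k|\leq c_0(\|\bmu\|_2^2 + \sqrt{d\log(n/\delta)})$ of Lemma~\ref{lemma:innerproduct_bound}, which is $o(d/n)$ under $d\geq C\max\{n\|\bmu\|_2^2, n^2\log(n/\delta)\}$ and hence negligible against the $\Theta(d)$ self-term.

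The main obstacle is the $\ell_q$-perturbation part of $E_k-E_j$, namely $\epsilon\, L(\btheta_t)\,(\partial\|\btheta_t\|_q)^\top(\zb_k-\zb_j)$. Using $\|\partial\|\btheta_t\|_q\|_p\leq 1$ and H\"older's inequality this is at most $\epsilon\, L(\btheta_t)\,\|\zb_k-\zb_j\|_q$, and for $p\leq 2$ (hence $q\geq 2$) one has $\|\zb_k-\zb_j\|_q\leq\|\zb_k-\zb_j\|_2 = O(\sqrt d)$, so it is dominated by the self-term exactly as the cross terms are. For $p>2$ (hence $q<2$), however, $\|\zb_k-\zb_j\|_q$ can be as large as $\Theta(d^{1/q})$, and taming it requires the full strength of the assumptions: the conditions $\|\bmu\|_2^2\geq C\epsilon\|\bmu\|_q$ and $\epsilon\leq\min\{R,\bar\gamma\}$ together with the learning-rate bound built around $M = \max\{[2d + \epsilon(q-1)d^{(3q-2)/(2q-2)}/\gamma]\exp(-\gamma^2/(Gd)+\epsilon/G),1\}$ in Assumption~\ref{assumption:gradient_descent}, which is engineered precisely to keep the perturbation-induced drift below the self-correcting term in this regime. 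Balancing these three estimates fixes the constant $c_3$ and closes the induction.
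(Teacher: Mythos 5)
Your proposal follows essentially the same route as the paper: the paper tracks the ratio $A_t=\max_k\exp(-\btheta_t^\top\zb_k)/\min_k\exp(-\btheta_t^\top\zb_k)$ by induction on $t$, decomposes the one-step update into a self-term of order $\Theta(d)$ per unit loss, cross-terms controlled by the near-orthogonality bound together with $d\geq C\max\{n\|\bmu\|_2^2,n^2\log(n/\delta)\}$, and the $\epsilon\,\partial\|\btheta_t\|_q$ drift, and closes the induction with exactly your two-case argument (negative feedback once $A_t$ exceeds roughly $2c_0^2$, a bounded multiplicative increment otherwise), arriving at the constant $5c_0^2$. The one place you diverge is the $\ell_q$-perturbation term for $q<2$: you leave this open and attribute its control to the step-size constant $M$ of Assumption~\ref{assumption:gradient_descent}, but $M$ plays no role in this lemma (it is inherited from the convergence results of Lemma~\ref{lemma:bound_loss}). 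The paper instead uses the crude bound $|\zb_k^\top\partial\|\btheta_t\|_q|\leq\|\zb_k\|_q\leq\|\zb_k\|_1\leq\sqrt{d}\,\|\zb_k\|_2\leq\sqrt{c_0}\,d$, valid uniformly for all $q\geq 1$, and then requires only that $\epsilon$ be below a small absolute constant so that $d/c_0-\epsilon\sqrt{c_0}\,d$ stays bounded below by a positive multiple of $d$; with that one-line substitution your argument matches the paper's.
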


Lemma \ref{lemma:loss_range} provides us a way to control the loss the noisy examples during the training procedure. Note that if $\max_{k=1}^n \exp(- \btheta_t^\top \zb_k) \leq c_3 \min_{k=1}^n \exp(- \btheta_t^\top \zb_k)$, we also have $\max_{k=1}^n \exp(- \btheta_t^\top \zb_k + \epsilon\|\btheta_t\|_q) \leq c_3 \min_{k=1}^n \exp(- \btheta_t^\top \zb_k + \epsilon\|\btheta_t\|_q)$. Therefore, the worst example training loss can be bounded via the best example training loss and further be bounded by the average training loss $L(\btheta_t)$. In this way, we can guarantee that those noisy examples will not have major influence on model training even in later training stages.

By using Lemmas \ref{lemma:innerproduct_bound}-\ref{lemma:loss_range}, we establish the following key lemma for our main theorems.

\begin{lemma}\label{lemma:lastlemma}
Under the same condition as in Theorem \ref{theorem:poprisk}, with probability at least $1-\delta$, the adversarially trained linear model parameter $\btheta_t$ satisfies
\begin{align*}
    \frac{\bmu^\top\btheta_{t}}{\|\btheta_{t}\|_2} \geq \bigg(\frac{\|\bmu\|_2^2 }{4} - \epsilon\|\bmu\|_q  \bigg)  \frac{1}{  (\sqrt{c_0} + \epsilon)\sqrt{d} } - \frac{c_3\|\bmu\|_2\log n}{\log t}.
\end{align*}
where $c_0$ is the absolute constant in Lemma \ref{lemma:innerproduct_bound}.
\end{lemma}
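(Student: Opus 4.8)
The plan is to track the numerator $\bmu^\top\btheta_t$ and the denominator $\|\btheta_t\|_2$ separately and only divide at the end. Since $\btheta_0=\zero$ and the update is $\btheta_{m+1}=\btheta_m-\alpha_m\nabla L(\btheta_m)$, I would telescope the iterates and plug in the closed-form gradient from \eqref{eq:closeform} to obtain the exact identity
\begin{align*}
\bmu^\top\btheta_t=\sum_{m=0}^{t-1}\alpha_m\sum_{i=1}^n g_{m,i}\big(\bmu^\top\zb_i-\epsilon\,\bmu^\top\partial\|\btheta_m\|_q\big),
\end{align*}
where $g_{m,i}=\exp(-\btheta_m^\top\zb_i+\epsilon\|\btheta_m\|_q)$. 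For the denominator I would invoke Lemma~\ref{lemma:bound_vt}, which gives $\|\btheta_t\|_2\le(\sqrt{c_0}+\epsilon)\sqrt{d}\sum_{m=0}^{t-1}\alpha_m L(\btheta_m)$. The whole point is then to lower bound the numerator by a matching multiple of the \emph{same} weighted loss sum $\sum_m\alpha_m L(\btheta_m)$, so that these two sums cancel in the ratio and leave behind the target factor $1/[(\sqrt{c_0}+\epsilon)\sqrt d]$.

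To bound the numerator I would split each inner sum over the clean index set $\cC$ and the noisy set $\cN$. Lemma~\ref{lemma:innerproduct_bound} supplies $\bmu^\top\zb_i\ge\|\bmu\|_2^2/2$ on $\cC$ and $\bmu^\top\zb_i\ge-3\|\bmu\|_2^2/2$ on $\cN$, while duality between $\|\cdot\|_p$ and $\|\cdot\|_q$ gives $\big\|\partial\|\btheta_m\|_q\big\|_p\le 1$, so that $|\bmu^\top\partial\|\btheta_m\|_q|\le\|\bmu\|_q$ and the subgradient term costs at most $\epsilon\|\bmu\|_q L(\btheta_m)$ at each step. The essential step is showing that the clean mass dominates the noisy mass: by Lemma~\ref{lemma:loss_range} the $g_{m,i}$ are within a constant factor $c_3$ of one another (the common factor $\exp(\epsilon\|\btheta_m\|_q)$ cancels in ratios), so combined with $|\cN|\le(\eta+c_1)n$ the quantity $\sum_{i\in\cN}g_{m,i}$ is a small fraction of $L(\btheta_m)=\sum_i g_{m,i}$ once $\eta$ is below the threshold $1/C$. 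This lets me absorb the $-3\|\bmu\|_2^2/2$ contributions of $\cN$ into half of the clean signal and conclude $\bmu^\top\btheta_t\ge(\|\bmu\|_2^2/4-\epsilon\|\bmu\|_q)\sum_m\alpha_m L(\btheta_m)$; dividing by the Lemma~\ref{lemma:bound_vt} bound yields the leading term. The coefficient $\|\bmu\|_2^2/4-\epsilon\|\bmu\|_q$ is positive precisely because of the assumption $\|\bmu\|_2^2\ge C\epsilon\|\bmu\|_q$.

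The remaining negative term $c_3\|\bmu\|_2\log n/\log t$ is the finite-iteration correction incurred when the actual iterate direction $\btheta_t/\|\btheta_t\|_2$ is used in place of its limiting max-margin direction $\wb$. I would extract it from Lemma~\ref{lemma:bound_loss}, eq.~\eqref{eq:tuo2}, which controls $1-\btheta_t^\top\wb/\|\btheta_t\|_2$ and hence, via Cauchy--Schwarz against $\bmu$, the discrepancy between $\bmu^\top\btheta_t/\|\btheta_t\|_2$ and its limit; this contributes a residual scaling with $\|\bmu\|_2$ (first power, in contrast to the $\|\bmu\|_2^2$ signal) and vanishing as $t\to\infty$. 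I expect the main obstacle to be the numerator estimate rather than this correction: one must ensure that neither the mislabeled points of $\cN$ nor the $\ell_q$-subgradient drift erodes the $\Theta(\|\bmu\|_2^2)$ signal produced by the clean points, and this is exactly where the near-balanced losses of Lemma~\ref{lemma:loss_range} together with the small-noise assumption $\eta<1/C$ are indispensable.
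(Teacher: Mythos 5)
Your proposal is correct, and the core of your argument --- telescoping $\bmu^\top\btheta_t$ over the gradient steps, splitting each inner sum over $\cC$ and $\cN$, using \eqref{eq:bound3}--\eqref{eq:bound4} for the signal, H\"older duality to charge the subgradient term $\epsilon\|\bmu\|_q L(\btheta_m)$, and Lemma~\ref{lemma:loss_range} together with $|\cN|\leq(\eta+c_1)n$ and $\eta<1/C$ to show the noisy mass is at most a small fraction of $L(\btheta_m)$ --- is exactly the paper's derivation of $\bmu^\top\btheta_{t+1}\geq(\|\bmu\|_2^2/4-\epsilon\|\bmu\|_q)\sum_{m\le t}\alpha_m L(\btheta_m)$. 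Where you diverge is the finishing step: you divide this lower bound directly by the Lemma~\ref{lemma:bound_vt} upper bound on $\|\btheta_t\|_2$, which is legitimate since the numerator's lower bound is positive under $\|\bmu\|_2^2\geq C\epsilon\|\bmu\|_q$, and the weighted loss sums cancel. This actually yields the leading term $(\|\bmu\|_2^2/4-\epsilon\|\bmu\|_q)/[(\sqrt{c_0}+\epsilon)\sqrt{d}]$ at every finite $t$, i.e.\ a \emph{stronger} statement than the lemma, with no need for the $-c_3\|\bmu\|_2\log n/\log t$ correction at all. The paper instead passes to the limit: it shows $\bmu^\top\wb\geq(\|\bmu\|_2^2/4-\epsilon\|\bmu\|_q)/[(\sqrt{c_0}+\epsilon)\sqrt{d}]$ using $\wb=\lim_t\btheta_t/\|\btheta_t\|_2$, and then transfers back to finite $t$ via the directional convergence rate \eqref{eq:tuo2}, which is precisely what introduces the $\log n/\log t$ term. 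Your closing remark about ``extracting'' that term from \eqref{eq:tuo2} is therefore redundant in your own scheme (subtracting a nonnegative quantity only weakens an already-valid lower bound), and your description of it as a correction for replacing $\wb$ by the iterate describes the paper's route rather than yours; but this does not affect correctness.
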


Lemma \ref{lemma:lastlemma} provides the lower bound for the inner product of $\bmu$ and the direction of $\btheta_t$. This lemma extends Lemma 4.4 in \cite{Li2020Implicit} by considering the training iteration $t$ rather than just the converged classifier $\wb$, and also extends to the adversarial training setting. Notice that this lower bound actually gets larger with the increase of iteration $t$.

\noindent\textbf{Finalizing the proof.} We now present the proof for Theorems \ref{theorem:poprisk} and \ref{theorem:popadvrisk}.

\begin{proof}[Proof of Theorem \ref{theorem:poprisk}]
First, following standard coupling lemma \citep{lindvall2002lectures}, there always exists a joint distribution on original data and noisy data $((\tilde\xb, \tilde y),(\xb, y))$ such that the marginal distribution for $(\tilde\xb, \tilde y)$ is $\tilde\cD$, the marginal distribution for $(\xb, y)$ is $\cD$, $\PP[\xb = \tilde\xb] = 1$ and $\PP[y \neq \tilde y] \leq \eta$.
Notice that the standard population risk can be written as
\begin{align}\label{eq:th1}
    \PP_{(\xb, y)\sim \cD}[f_{\btheta_t}(\xb) \neq y]
    &= \PP_{(\xb, y)\sim \cD}[y \cdot\btheta_t^\top \xb < 0]\notag\\
    &\leq \eta + \PP_{(\xb, y)\sim \cD}[y \cdot\btheta_t^\top \xb < 0, y=\tilde y] \notag\\
    &= \eta + \PP_{(\xb, y)\sim \cD}[\tilde  y \cdot\btheta_t^\top \xb < 0],
\end{align}
where the inequality holds since $\PP[y \neq \tilde y] \leq \eta$.
Since $\tilde y$ is the clean label for $\xb$, $\tilde y \xb$ follows the same distribution as $\bxi + \bmu$ and $\EE[\tilde y \cdot \hat\btheta^\top \xb] = \hat\btheta^\top \bmu$.
Therefore, \eqref{eq:th1} can be further written as
\begin{align}\label{eq:th2}
    \PP_{(\xb, y)\sim \cD}[f_{\btheta_t}(\xb) \neq y]
   &\leq \eta + \PP_{(\xb, y)\sim \cD}\big[\tilde  y \cdot\btheta_t^\top \xb - \EE[\tilde  y \cdot\btheta_t^\top \xb] < -\btheta_t^\top \bmu \big]\notag\\
   &= \eta + \PP_{(\xb, y)\sim \cD}\big[\btheta_t^\top\big(\tilde  y \xb - \EE[\tilde  y \xb]  \big)< -\btheta_t^\top \bmu \big]\notag\\
   &\leq \eta + \exp\bigg( -c \frac{(\btheta_t^\top \bmu)^2}{\|\btheta_t\|_2^2} \bigg),
\end{align}
where the last inequality holds by applying a Hoeffding-type concentration inequality (Theorem \ref{lemma:vershynin5.10}) with $t=(\btheta_t^\top \bmu)^2$. This bound in \eqref{eq:th2} enables the application of Lemma \ref{lemma:lastlemma} which characterizes how the direction of $\btheta_t$ aligns with $\bmu$ during training. 
By direct calculation, we have
\begin{align*}
    \PP_{(\xb, y)\sim \cD}[f_{\btheta_t}(\xb) \neq y]
   &\leq \eta + \exp\Bigg( -c \bigg(   \frac{\big(\frac{\|\bmu\|_2^2 }{4} - \epsilon\|\bmu\|_q  \big)}{  (\sqrt{c_0} + \epsilon)\sqrt{d} } - \frac{c_3\|\bmu\|_2\log n}{\log t}\bigg)^2 \Bigg).
\end{align*}
This concludes the proof.
\end{proof}

\begin{proof}[Proof of Theorem \ref{theorem:popadvrisk}]
Similar as in the proof of Theorem \ref{theorem:poprisk}, we start with a calculating an upper bound of the population risk based on the formulation of the label noise. By the definition of the adversarial risk, we have
\begin{align}\label{eq:th3}
    \PP_{(\xb, y)\sim \cD}\big[ \exists \xb' \in \cB^p_{\epsilon}(\xb) \ s.t., \ f_{\btheta_t}(\xb') \neq y \big]
    &= \PP_{(\xb, y)\sim \cD}[\exists \xb' \in \cB^p_{\epsilon}(\xb) \ s.t., \ y \cdot\btheta_t^\top \xb' < 0]\notag\\
    &\leq \eta + \PP_{(\xb, y)\sim \cD}[\exists \xb' \in \cB^p_{\epsilon}(\xb) \ s.t., \ y \cdot\btheta_t^\top \xb' < 0, y=\tilde y] \notag\\
    &= \eta + \PP_{(\xb, y)\sim \cD}\Big[\min_{\ub \in \cB^p_{\epsilon}(\zero)} \tilde  y \cdot\btheta_t^\top (\xb+\ub) < 0\Big]\notag\\
    &=\eta + \PP_{(\xb, y)\sim \cD}\Big[ \tilde  y \cdot\btheta_t^\top \xb - \epsilon\|\btheta_t\|_q< 0\Big],
\end{align}
where the inequality holds in the same way as in \eqref{eq:th1}.
Since $\tilde y$ is the clean label for $\xb$, $\tilde y \xb$ follows the same distribution as $\bxi + \bmu$ and $\EE[\tilde y \cdot \btheta_t^\top \xb] = \btheta_t^\top \bmu$.
Therefore, \eqref{eq:th3} can be further written as
\begin{align}\label{eq:th4}
   \PP_{(\xb, y)\sim \cD}\big[ \exists \xb' \in \cB^p_{\epsilon}(\xb) \ s.t., \ f_{\btheta_t}(\xb') \neq y \big]
   &\leq \eta + \PP_{(\xb, y)\sim \cD}\big[\tilde  y \cdot\btheta_t^\top \xb - \EE[\tilde  y \cdot\btheta_t^\top \xb] < -\btheta_t^\top \bmu + \epsilon\|\btheta_t\|_q\big]\notag\\
   &= \eta + \PP_{(\xb, y)\sim \cD}\big[\btheta_t^\top\big(\tilde  y \xb - \EE[\tilde  y \xb]  \big)< -\btheta_t^\top \bmu + \epsilon\|\btheta_t\|_q\big]\notag\\
   &\leq \eta + \exp\bigg( -c \frac{(\btheta_t^\top \bmu - \epsilon\|\btheta_t\|_q)^2}{\|\btheta_t\|_2^2} \bigg),
\end{align}
where the second inequality holds by applying the Hoeffding-type concentration inequality (Theorem \ref{lemma:vershynin5.10}) with $t=(\btheta_t^\top \bmu- \epsilon\|\btheta_t\|_q)^2$. Based on \eqref{eq:th4} and Lemma \ref{lemma:lastlemma}, we can further give bounds of the adversarial risk. We consider the two settings $1 \leq p \leq 2$ and $2 < p < \infty $ separately.

When $1 \leq p \leq 2$, we have $q \geq 2$ and $\|\btheta\|_q \leq \|\btheta\|_2$. In this case, by Lemma \ref{lemma:lastlemma} we obtain
\begin{align*}
    \PP_{(\xb, y)\sim \cD}[f_{\btheta_t}(\xb) \neq y]
    &\leq \eta + \exp\Bigg( -c \bigg( \frac{ \big(\frac{\|\bmu\|_2^2 }{4} - \epsilon\|\bmu\|_q  \big)}{ (\sqrt{c_0} + \epsilon) \sqrt{d} } - \frac{c_3\|\bmu\|_2\log n}{\log t} -  \epsilon \bigg)^2 \Bigg).
\end{align*}

When $p > 2$ and therefore $q < 2$, we have $\|\bmu\|_q \leq d^{1/q -1/2}\|\bmu\|_2$. In this case, by Lemma \ref{lemma:lastlemma} we obtain
\begin{align*}
    \PP_{(\xb, y)\sim \cD}[f_{\btheta_t}(\xb) \neq y]
    &\leq \eta + \exp\Bigg( -c \bigg( \frac{ \big(\frac{\|\bmu\|_2^2 }{4} - \epsilon\|\bmu\|_q  \big)}{ (\sqrt{c_0} + \epsilon) \sqrt{d} } - \frac{c_3\|\bmu\|_2\log n}{\log t} -  \epsilon d^{\frac{1}{q} -\frac{1}{2}} \bigg)^2 \Bigg).
\end{align*}
This concludes the proof.
\end{proof}

\section{Experiments}\label{sec:exp}
In this section, we experimentally study the behavior of the adversarially trained linear classifier in the over-parameterized regime on synthetic data. Specifically, we generate $50$ training samples and $2000$ test samples and set the label noise ratio $\eta = 0.1$ for all experiments. Each clean sample $(\tilde\xb, \tilde y)$ is drawn from a Gaussian mixture model such that $\tilde y \sim \text{Unif}(\{\pm 1\})$ and $\tilde\xb = \tilde y\bmu + \bxi$ where $\bxi \in \RR^d$ and $\xi_1, \xi_2, \ldots, \xi_d$ are i.i.d. standard Gaussian variables and $\bmu$ simply shares the same direction as an all-one vector but has various different magnitudes. This aligns with our model assumptions in Section \ref{sec:settings}. For the adversarial training algorithm, we directly follows Algorithm \ref{alg:adv} except using a more practical Xavier normal initialization \citep{glorot2010understanding}, i.e., sampling $\btheta_0$ i.i.d. from from $\cN(0, 1/\sqrt{d})$. We set the learning rate $\alpha_t = 0.001$ and the total number of iterations $T=1000$ for all experiments.
All results are obtained by averaging over $10$ independent runs (both data sampling and training).

In the first set of experiments, we verify our main conclusions in this paper, that benign overfitting can occur in adversarial training. Figure \ref{fig:risk-vs-dimension} (a-d) illustrates the risk and the adversarial risk of adversarially trained linear classifiers versus the dimension $d$ under different scalings of $\bmu$ for both $\ell_2$-norm and $\ell_\infty$-norm perturbations. We can observe that when $\|\bmu\|_2 = d^{0.2}$, the (adversarial) risk starts to increase as the dimension $d$ increases after an initial dive for both $\ell_2$-norm and $\ell_\infty$-norm perturbations. While for cases where $\|\bmu\|_2 = d^{0.3}$ and $\|\bmu\|_2 = d^{0.4}$, we can observe that the (adversarial) risk decreases steadily to the optimal risk $\eta$ as the dimension $d$ increases. This results backup our theory in Section \ref{sec:theorem} that the optimal risk is achievable when $\|\bmu\|_2 = \Theta(d^{r})$ and $r \in (1/4, 1/2]$. Note that the training error reaches 0 for all settings in Figure \ref{fig:risk-vs-dimension}.

\setlength{\textfloatsep}{12pt}
\begin{figure}[t!]
\centering
\subfigure[$\ell_2$ perturbation]{\includegraphics[width=0.32\textwidth]{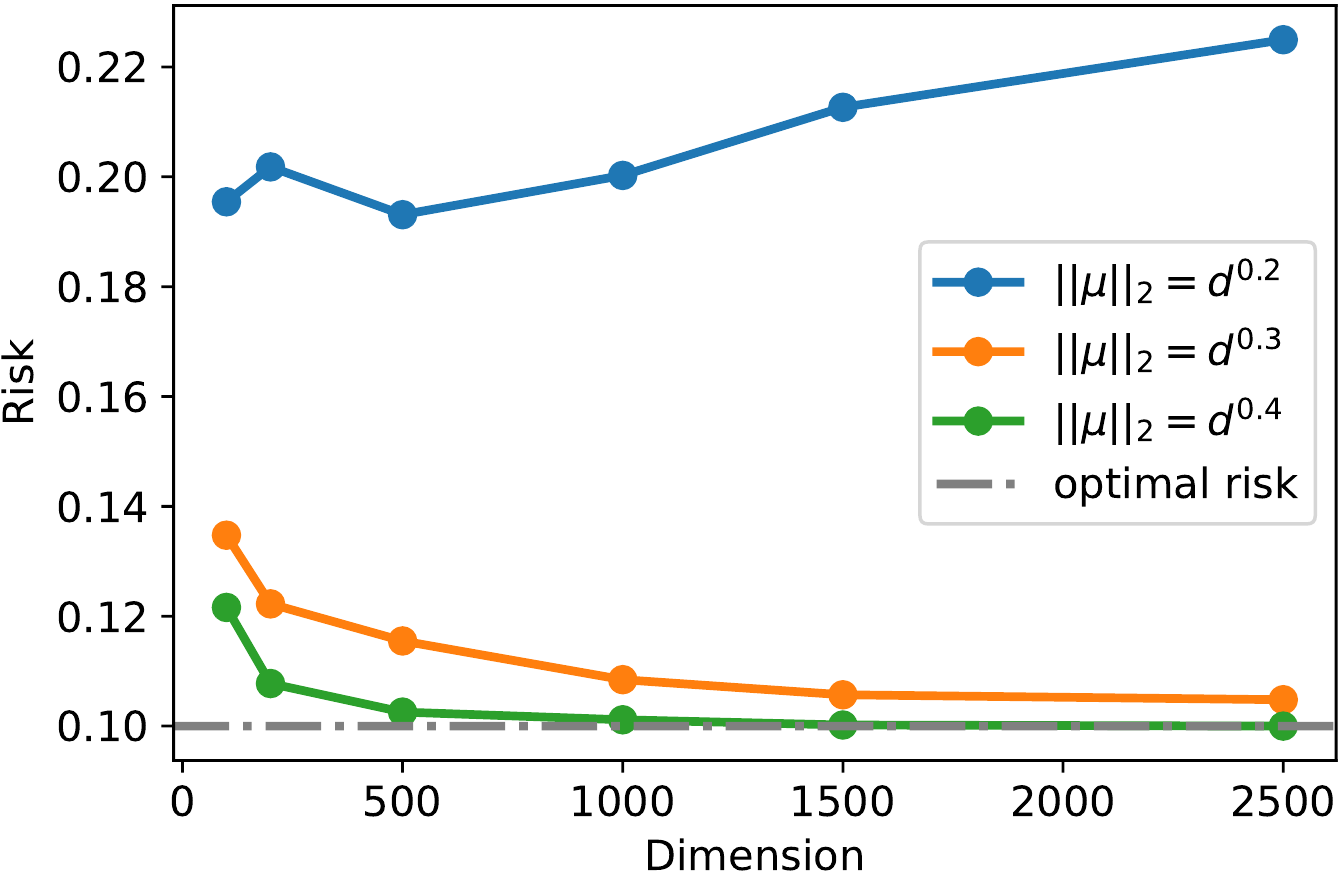}}
\subfigure[$\ell_2$ perturbation]{\includegraphics[width=0.32\textwidth]{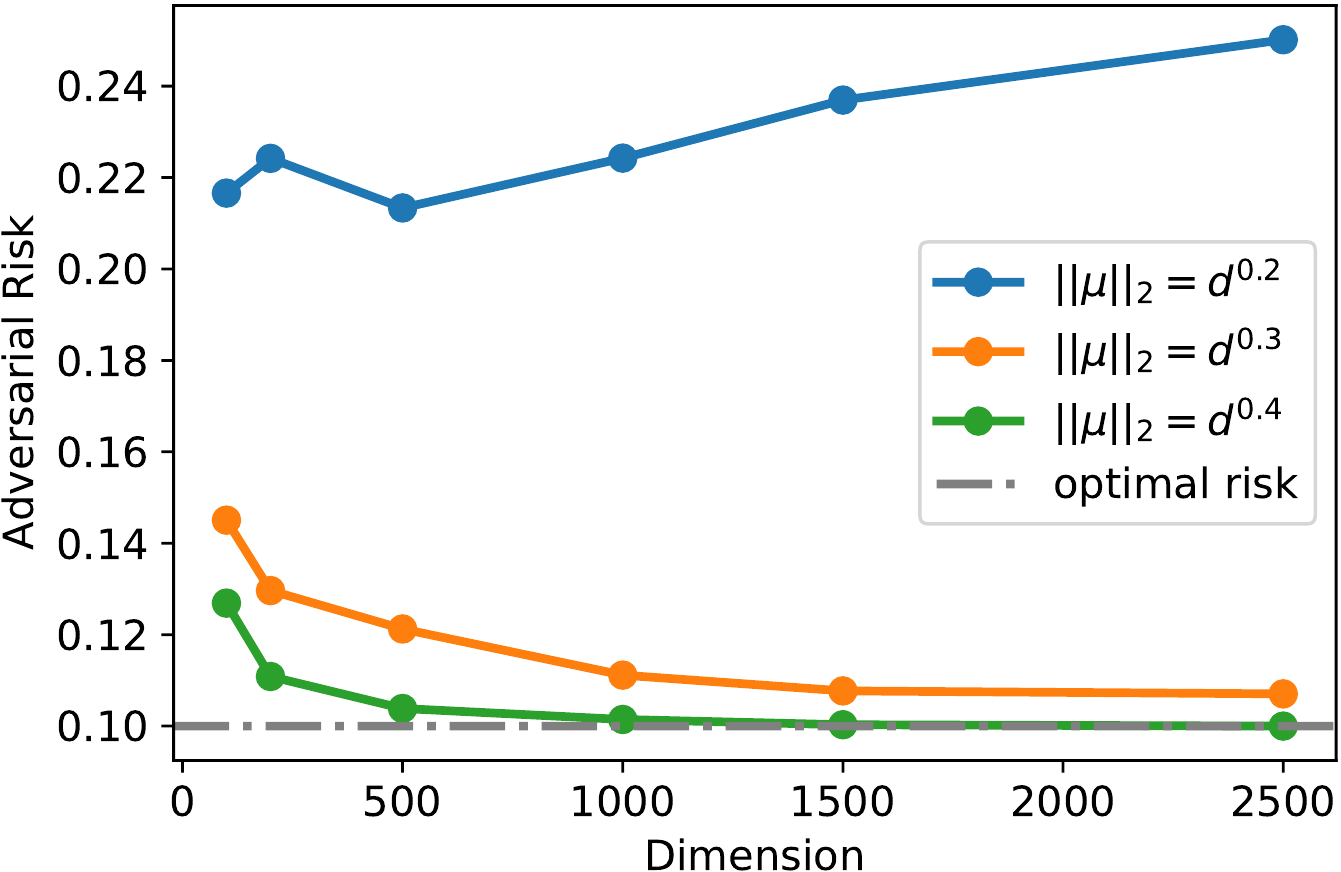}}
\subfigure[$\ell_\infty$ perturbation]{\includegraphics[width=0.32\textwidth]{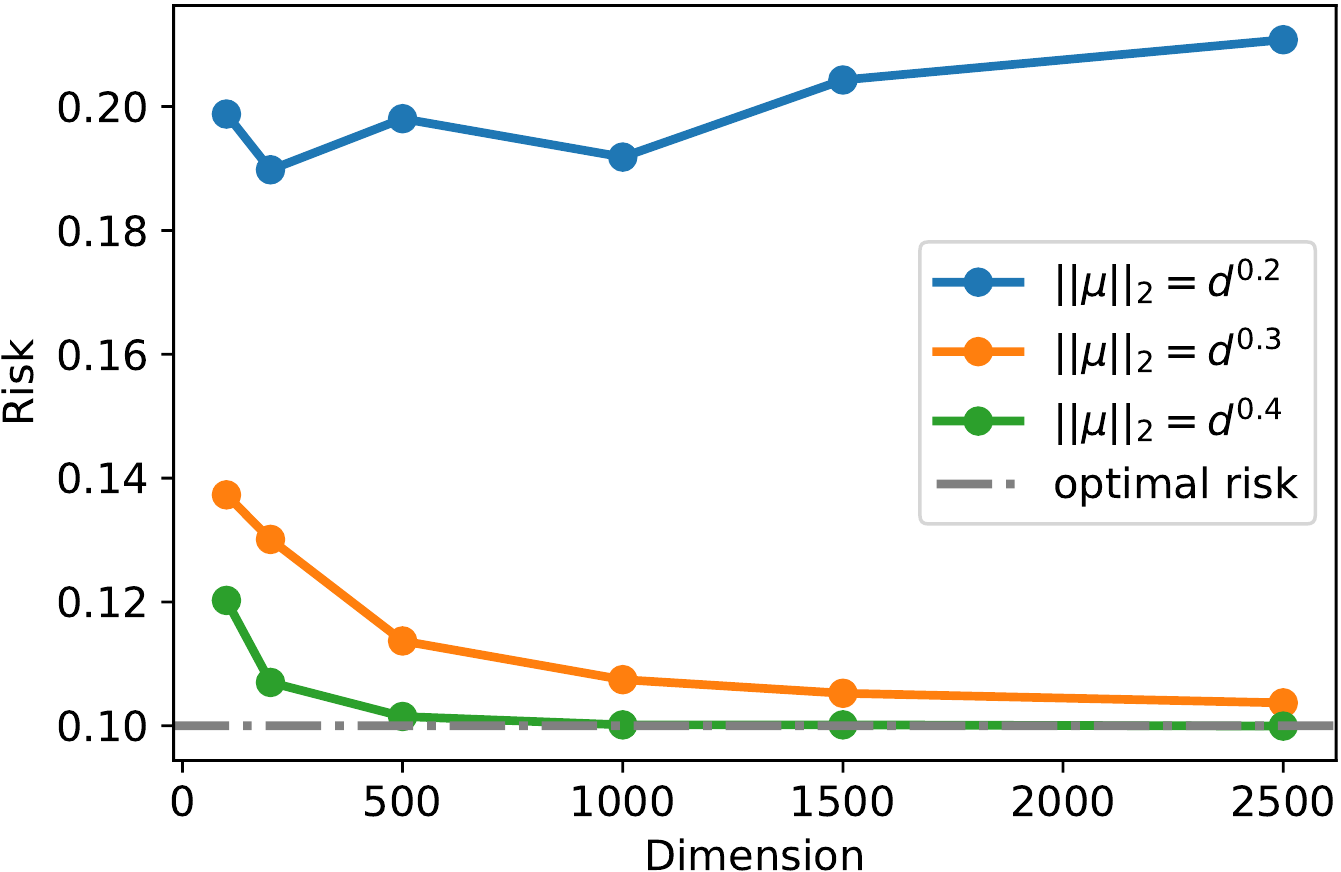}}
\subfigure[$\ell_\infty$ perturbation]{\includegraphics[width=0.32\textwidth]{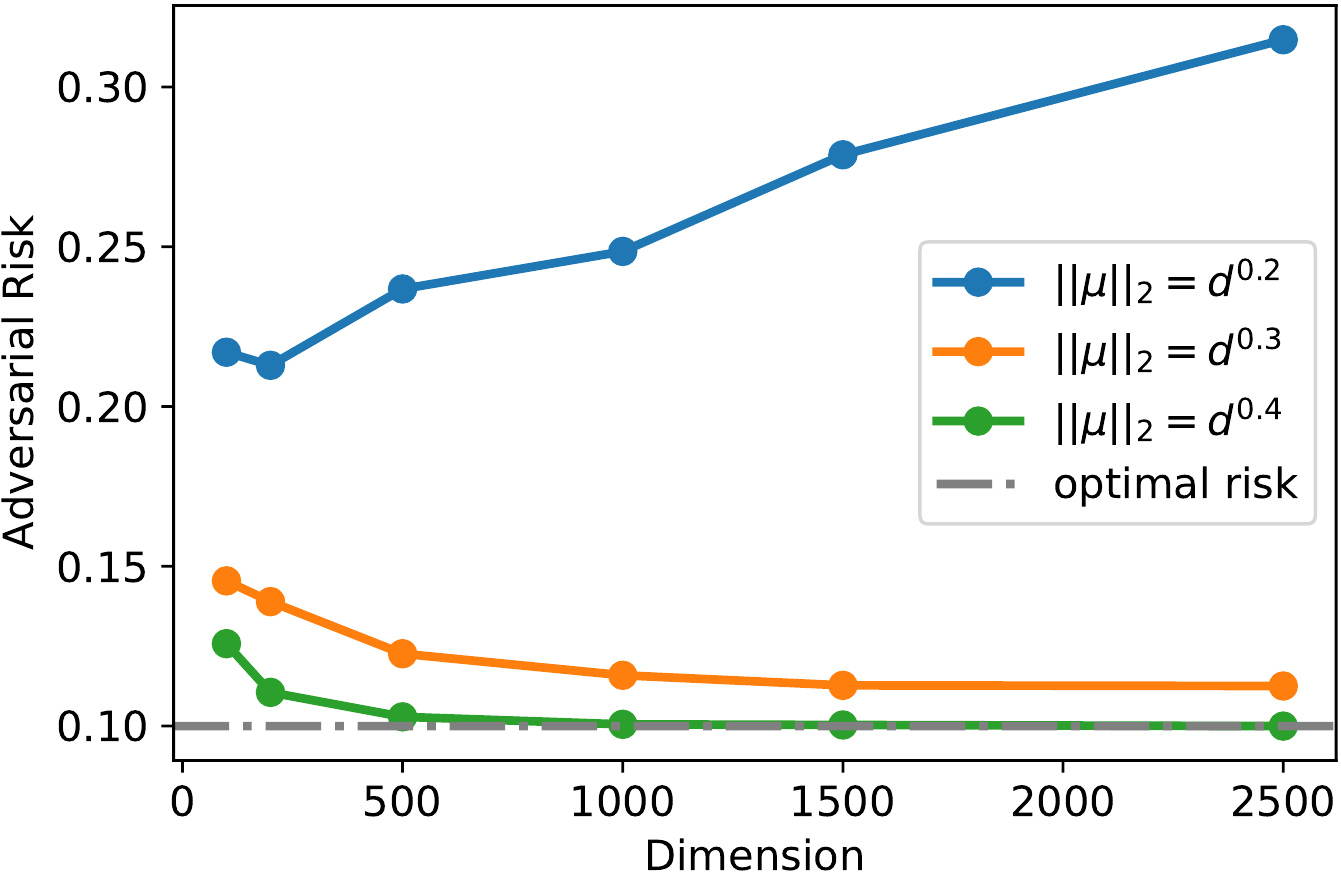}}
\subfigure[$\ell_2$ perturbation]{\includegraphics[width=0.325\textwidth]{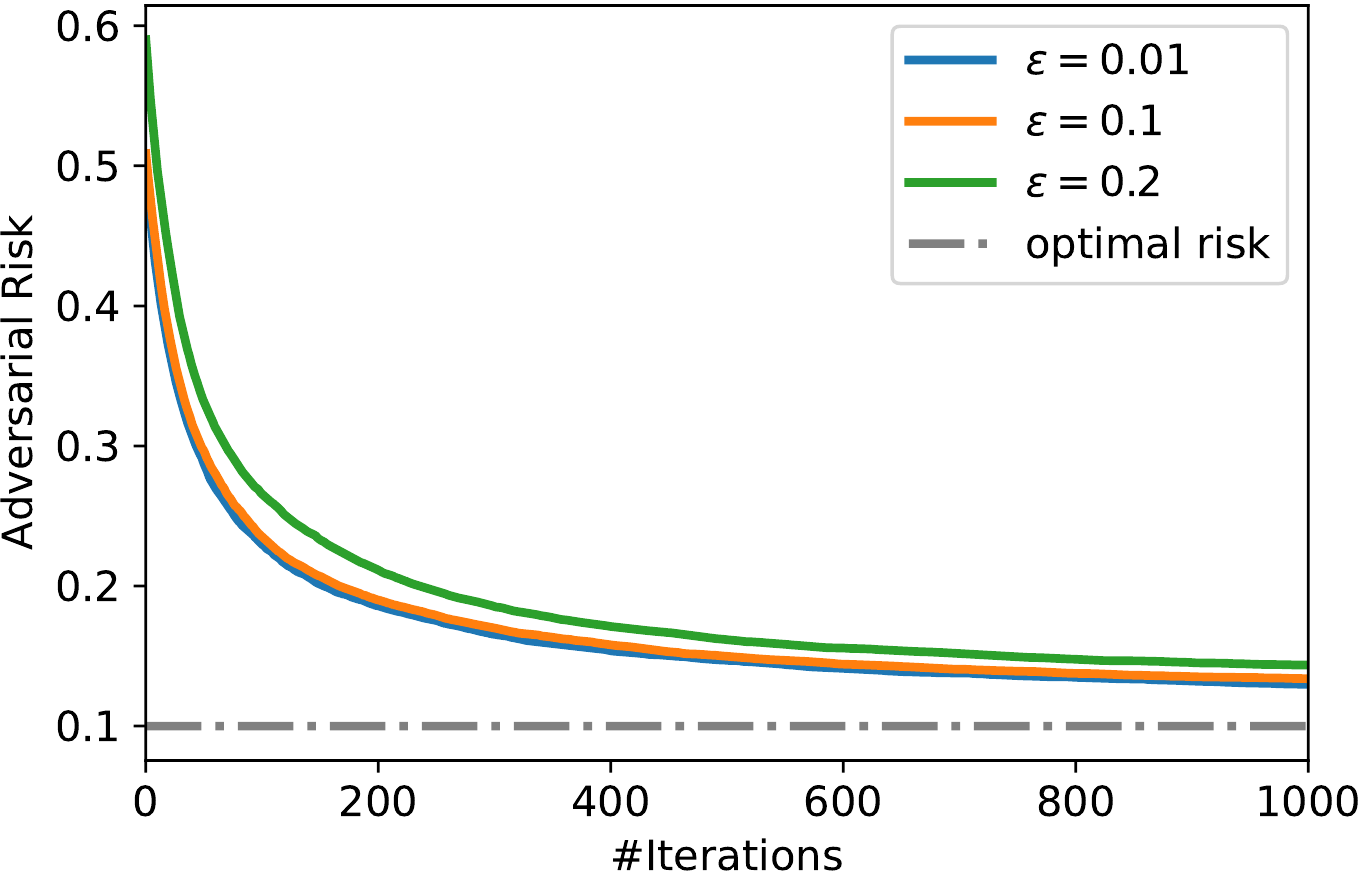}}
\subfigure[$\ell_\infty$ perturbation]{\includegraphics[width=0.325\textwidth]{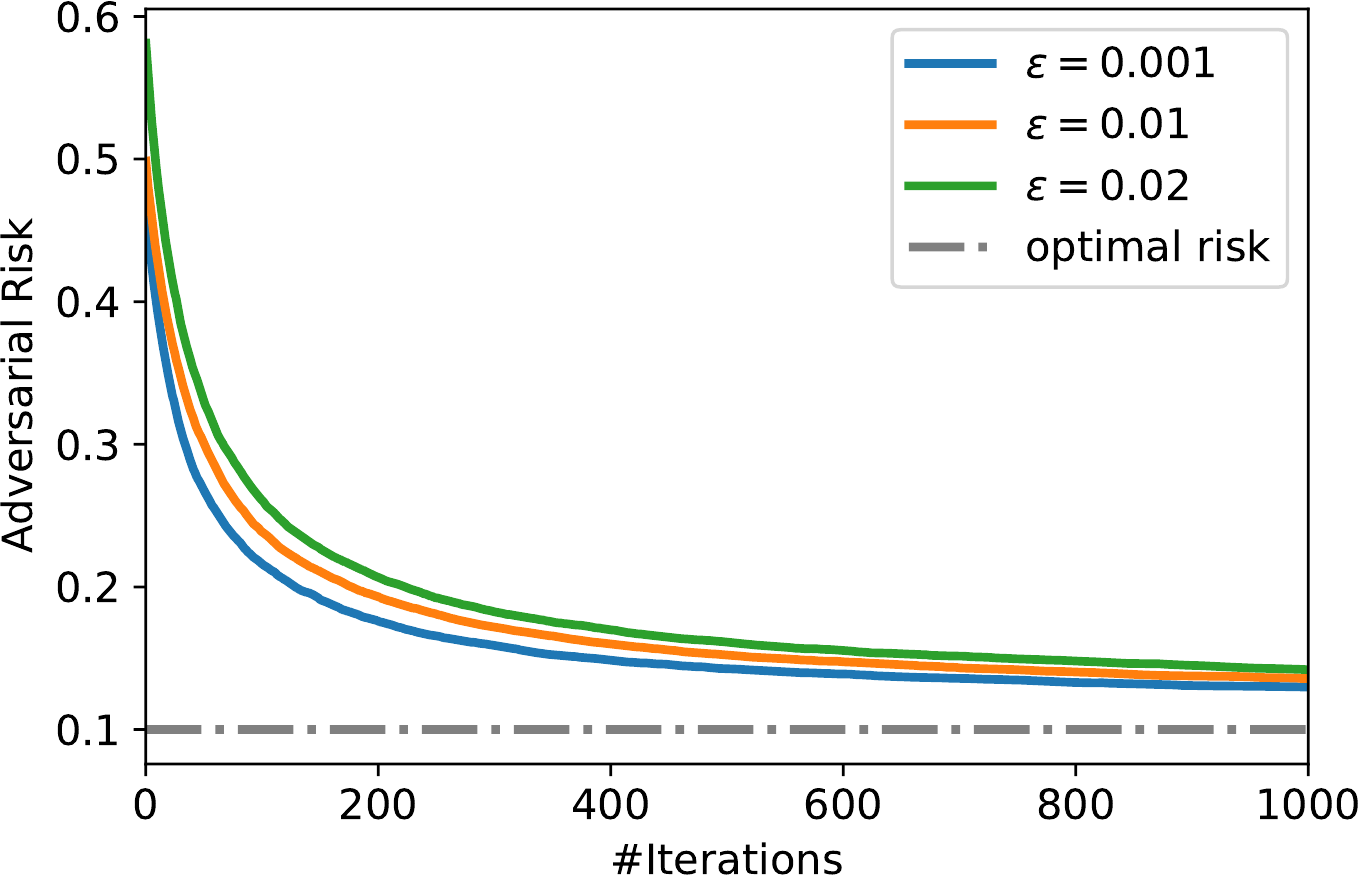}}
\caption{(a-d) Risk and adversarial risk of adversarially trained linear classifiers versus the dimension $d$ under different scalings of $\bmu$. (a)(b) show the results for $\ell_2$ perturbation with $\epsilon=0.1$ and (c)(d) show the results for $\ell_\infty$ perturbation with $\epsilon=0.01$. 
(e-f) Adversarial risk of adversarially trained linear classifiers versus the training iterations $t$ for different $\epsilon$ with $d=200$ and $\|\bmu\|_2 = d^{0.3}$. 
The training error reaches $0$ for all experiments.
}
\label{fig:risk-vs-dimension}
\end{figure}

In Figure \ref{fig:risk-vs-dimension} (e-f), we present the adversarial risk\footnote{Here we omit the plot for standard risk as the curves are essentially overlapping to each other.} of adversarially trained linear classifiers versus the training iterations $t$ with different $\epsilon$ but fixed dimension $d$ and $\|\bmu\|_2$ for both $\ell_2$-norm and $\ell_\infty$-norm perturbations. We can also observe that in general, a larger $\epsilon$ will lead to the worse adversarial risk of the adversarially trained classifier. This also backs up our theory in Theorem \ref{theorem:popadvrisk}.

As our ultimate goal is to study the benign overfitting phenomenon in real-world adversarial training settings, we also conducted experiments on 2-layer neural networks with ReLU activation functions. In fact, the performances on the 2-layer ReLU network suggest very similar trends as the linear model. Due to space limit, we display these results in the supplemental materials.

\section{Conclusions and Future Work}\label{sec:con}
In this paper, we show that the benign overfitting phenomenon also occurs in adversarial training, a principled approach to defend against adversarial examples. Specifically, we derive the risk bounds of the adversarially trained linear classifiers and show that under moderate $\ell_p$-norm perturbations, they can achieve the near-optimal standard and adversarial risks, despite overfitting the noisy training data. The numerical experimental results also validate our theoretical findings.

Our current analysis is limited to linear classifiers, while in practice, adversarial training is commonly used with neural networks. 
We believe our work is the first step towards analyzing benign overfitting in adversarially trained neural networks. 
Yet extending our current analysis to 
adversarially trained neural networks is highly non-trivial and we leave it as a future work.

\appendix

\section{Comparison with Dan et al. (2020), Taheri et al. (2020) and Javanmard \& Soltanolkotabi (2020)}
\cite{dan2020sharp} proposed an adversarial signal to noise ratio and studied the excess risk lower/upper bounds for learning Gaussian mixture models. Compared to the setting studied in \cite{dan2020sharp}, our setting covers additional label flipping noises. More importantly, we study an estimator found by gradient descent that overfits the training data, while \cite{dan2020sharp} studied a specific plug-in estimator which does not overfit the training data. Due to these differences, there is a discrepancy in the risk bounds derived in both papers. 

\cite{taheri2020asymptotic,javanmard2020precise} studied adversarial learning of linear models in the proportional limit setting, i.e., $d / n = O(1)$. In this setting, the data Gram matrix and the sample covariance matrix can be studied based on random matrix theory/Gaussian comparison inequalities/convex Gaussian min-max theorem. In contrast, in our setting where $d > \tilde{O} (n^2 )$, the sample covariance matrix is singular but the $n \times n$ Gram matrix concentrates around its expectation. Therefore, our  setting is different from the proportional limit setting in \cite{taheri2020asymptotic,javanmard2020precise}, and these results are not directly comparable.

\section{Proof of Key Technical Lemmas}

\subsection{Proof of Lemma \ref{lemma:bound_loss}}
\begin{proof} 
We first prove that $L(\btheta_1) \leq 2n$. To show this, we  observe that $\btheta_1 = \alpha_0\sum_{k=1}^n \zb_k$. Therefore
\begin{align*}
    L(\btheta_1) &= \sum_{k=1}^n \exp(-\btheta_1^\top\zb_k + \epsilon\|\btheta_1\|_q)\\
    &= \sum_{k=1}^n \exp\bigg(-\alpha_0 \sum_{i=1}^n \zb_i^\top\zb_k + \alpha_0\epsilon \Big\| \sum_{i=1}^n \zb_i\Big\|_q\bigg)\\
    &\leq \sum_{k=1}^n \exp\bigg(\alpha_0 n \Big( c_0\big(\|\bmu\|_2^2 + \sqrt{d   \log(n/\delta)}\big) + \epsilon\sqrt{c_0}d \Big)\bigg)\\
    &\leq \sum_{k=1}^n \exp(1/16) \leq 2n,
\end{align*}
where the first equality holds due to Lemma \ref{lemma:innerproduct_bound} and the fact that for any $\ub\in \RR^d, \|\ub\|_q \leq \|\ub\|_1 \leq \sqrt{d}\|\ub\|_2$, while the second inequality is by the choice of sufficiently small $\alpha_0$ and the assumptions that $d \geq Cn \|\bmu\|_2^2$ and $\epsilon\leq R$ for some absolute constants $C$ and $R$.  

The rest part of Lemma \ref{lemma:bound_loss} summarizes parts of the results in \cite{Li2020Implicit}. However, the results in \cite{Li2020Implicit} are derived under the setting that $\| \xb_i \|_2 \leq 1$, Therefore 
to prove lemma \ref{lemma:bound_loss}, we re-scale our data and model parameters and convert our setting to the setting in \cite{Li2020Implicit}. 

By lemma~\ref{lemma:innerproduct_bound}, with probability at least $1 - \delta$, $\|\xb_i\|_2^2 \leq c_0 d$ for all $i \in [n]$. We therefore denote  $B:=\sqrt{c_0 d}$, and then $\tilde\xb_i : = \xb_i / B$ has $\ell_2$-norm less than or equal to one. 
Further denote by $\bbeta_t $ the linear model parameters in \cite{Li2020Implicit}'s algorithm, $\tilde\zb_i = y_i \tilde\xb_i$, $\eta_t$ as their step sizes, $\tilde\epsilon$ as their perturbation strength, and 
\begin{align*}
    \tilde\gamma :=  \max_{\|\btheta\|_2 =1 } \min_{i\in[n]} y_i \btheta^\top \tilde\xb_i
\end{align*}
as the $\ell_p$ margin. Then the adversarial training update rule in \cite{Li2020Implicit} is 
\begin{align*}
    \bbeta_{t+1} = \bbeta_t -  \frac{\eta_t}{n}  \sum_{i=1}^n \nabla_{\bbeta} \exp (-\bbeta_t^\top \tilde\zb_k + \tilde\epsilon\|\bbeta_t\|_q).
\end{align*}
Note that our update rule is 
\begin{align*}
    \btheta_{t+1} = \btheta_t - \alpha_t \sum_{k=1}^n \nabla_{\btheta} \exp (-\btheta_t^\top \zb_k + \epsilon\|\btheta_t\|_q).
\end{align*}
Now in order to apply the results in \cite{Li2020Implicit}, we convert our parameters to match their scaling. Since
\begin{align*}
    \btheta_{t+1} &= \btheta_t - \alpha_t \sum_i \nabla_{\btheta}  \exp (-B\btheta_t^\top \zb_k/B + \epsilon\|B\btheta_t\|_q/B)\\
    &= \btheta_t - \frac{nB\alpha_t}{n} \sum_i \nabla_{(B \btheta)} \exp (-B\btheta_t^\top \zb_k/B + \epsilon\|B\btheta_t\|_q/B).
\end{align*}
Therefore
\begin{align*}
    B\btheta_{t+1} = B\btheta_t - \frac{nB^2\alpha_t}{n} \sum_i \nabla_{(B \btheta)} \exp (-B\btheta_t^\top \zb_k/B + \epsilon\|B\btheta_t\|_q/B).
\end{align*}
It is easy to observe that we can now apply Theorem 3.3 and Theorem 3.4 in \cite{Li2020Implicit} by setting $\bbeta_t = B\btheta_t, \eta_t = nB^2\alpha_t, \tilde\epsilon = \epsilon/B$. Moreover, by $\tilde\xb_i = \xb_i /B$, $\tilde\epsilon = \epsilon/B$ and the definition of $\tilde\gamma$, we have $\tilde\gamma = \bar\gamma / B$. Based on these relations, it is easy to see that under the conditions of Lemma \ref{lemma:bound_loss}, $\tilde\xb_i$, $\eta_t$, $\tilde\epsilon$, $\tilde\gamma$ satisfy the assumptions of Theorems~3.3 and 3.4 in \cite{Li2020Implicit}. 
Now
\eqref{eq:tuo1} is an intermediate result of the proof of Theorem 3.3 in \cite{Li2020Implicit}, and \eqref{eq:tuo2} follows by Theorem 3.4 in \cite{Li2020Implicit}.
\end{proof}

\subsection{Proof of Lemma \ref{lemma:bound_vt}}
\begin{proof}
We have 
\begin{align*}
    \|\btheta_{t+1}\|_2 &= \bigg\|\sum_{m=0}^t \alpha_m \cdot \nabla  L(\btheta_m)\bigg\|_2\\
    &\leq  \sum_{m=0}^t \alpha_m \|\nabla  L(\btheta_m)\|_2\\
    &\leq \sum_{m=0}^t \alpha_m \bigg\|\sum_{k=1}^n \big(\zb_k - \epsilon\cdot\partial\|\btheta_m\|_q\big)\cdot \exp\big(-\zb_k^\top \btheta_m + \epsilon \|\btheta_m\|_q\big)\bigg\|_2,
\end{align*}
where the first three inequality hold by triangle inequality.
By Lemma \ref{lemma:partial_norm_bound}, we have 
\begin{align*}
    \|\btheta_{t+1}\|_2  
    &\leq  \sum_{m=0}^t \alpha_m \sum_{k=1}^n (\|\zb_k\|_2 + \epsilon \sqrt{d}) \cdot \exp\big(-\zb_k^\top \btheta_m + \epsilon \|\btheta_m\|_q\big) \\
    &\leq  (\sqrt{c_0} + \epsilon )\sqrt{d} \sum_{m=0}^t \alpha_m \sum_{k=1}^n  \cdot \exp\big(-\zb_k^\top \btheta_m + \epsilon \|\btheta_m\|_q\big)\\
    &=  (\sqrt{c_0} + \epsilon)\sqrt{d} \sum_{m=0}^t \alpha_m L(\btheta_m),
\end{align*}
where the second inequality is due to Lemma \ref{lemma:innerproduct_bound}. 
\end{proof}

\subsection{Proof of Lemma \ref{lemma:loss_range}}
\begin{proof}
We will prove this lemma by induction.  

Let's denote $E_k^t=\exp(- \btheta_t^\top \zb_k)$.  Without loss of generality, let $E_1^t$ denotes the maximum of $\{E_k^t\}_{k=1}^n$ and $E_2^t$ denotes the minimum of $\{E_k^t\}_{k=1}^n$. We also define $A_t := E_1^t/E_2^t$ and the goal is to show that $A_t \leq 5c_0^2$.

For the base case ($t=0$), we have $E_k^0 = \exp(0) = 1$. Therefore we have $A_0 = 1 \leq 5c_0^2$.

For $t>0$, notice that 
\begin{align}\label{eq:at1}
    A_{t+1} &= \frac{\exp(- \btheta_{t+1}^\top \zb_1)}{\exp(- \btheta_{t+1}^\top \zb_2)}  = \frac{\exp(- \btheta_{t}^\top \zb_1)}{\exp(- \btheta_{t}^\top \zb_2)}\cdot\frac{\exp(\alpha_t\nabla L(\btheta_{t})^\top \zb_1)}{\exp(\alpha_t\nabla L(\btheta_{t})^\top \zb_2)} \notag\\
    &= A_t \cdot \frac{\exp(-\alpha_t \sum_{k=1}^n (\zb_k - \epsilon \partial\|\btheta_t\|_q)^\top \zb_1 \cdot \exp(-\btheta_t^\top \zb_k+ \epsilon\|\btheta_t\|_q))}{\exp(-\alpha_t \sum_{k=1}^n (\zb_k - \epsilon \partial\|\btheta_t\|_q)^\top \zb_2 \cdot \exp(-\btheta_t^\top \zb_k + \epsilon\|\btheta_t\|_q))} \notag\\
    &=  A_t \cdot \underbrace{\frac{\exp(-\alpha_t  (\zb_1 - \epsilon \partial\|\btheta_t\|_q)^\top \zb_1 \cdot \exp(-\btheta_t^\top \zb_k+ \epsilon\|\btheta_t\|_q))}{\exp(-\alpha_t (\zb_2 - \epsilon \partial\|\btheta_t\|_q)^\top \zb_2 \cdot \exp(-\btheta_t^\top \zb_k + \epsilon\|\btheta_t\|_q))}}_{I_1} \notag\\
    &\qquad\cdot \underbrace{\frac{\exp(-\alpha_t \sum_{k\neq 1}^n (\zb_k - \epsilon \partial\|\btheta_t\|_q)^\top \zb_1 \cdot \exp(-\btheta_t^\top \zb_k+ \epsilon\|\btheta_t\|_q))}{\exp(-\alpha_t \sum_{k \neq 2}^n (\zb_k - \epsilon \partial\|\btheta_t\|_q)^\top \zb_2 \cdot \exp(-\btheta_t^\top \zb_k + \epsilon\|\btheta_t\|_q))}}_{I_2}.
\end{align}
For term $I_1$, note that by Lemma \ref{lemma:innerproduct_bound} we have
\begin{align*}
\sqrt{\frac{d}{c_0}} \leq \|\zb_k\|_2 \leq \sqrt{c_0 d}.
\end{align*}
Also since by Lemma \ref{lemma:partial_norm_bound}, we have $\big\|\partial\|\btheta_t\|_q \big\|_p = 1$, 
\begin{align}\label{eq:signproduct}
  |\zb_k^\top \partial\|\btheta_t\|_q| \leq \|\zb_k\|_q \cdot \big\|\partial\|\btheta_t\|_q \big\|_p = \|\zb_k\|_q \leq \|\zb_k\|_1 \leq  \sqrt{d}\|\zb_k\|_2 \leq \sqrt{c_0}d.
\end{align}
Therefore, we have
\begin{align}\label{eq:i1}
    I_1 &\leq \exp\bigg(-\alpha_t  \Big(\frac{d}{c_0} - \epsilon\sqrt{c_0}d \Big)\exp(-\btheta_t^\top \zb_1 + \epsilon\|\btheta_t\|_q) + \alpha_t \Big(c_0 d + \epsilon\sqrt{c_0}d \Big)\exp(-\btheta_t^\top \zb_2 + \epsilon\|\btheta_t\|_q) \bigg) \notag\\
    &= \exp\Bigg(-\alpha_t E_2^t\bigg(  \Big(\frac{d}{c_0} - \epsilon\sqrt{c_0}d \Big) A_t - \Big(c_0 d + \epsilon\sqrt{c_0}d \Big)  \bigg) \exp \big( \epsilon\|\btheta_t\|_q \big) \Bigg).
\end{align}
For term $I_2$, by \eqref{eq:bound2} and \eqref{eq:signproduct} we have 
\begin{align}\label{eq:i2}
    I_2 &\leq  \exp\bigg(\alpha_t  \Big(c_0\big(\|\bmu\|_2^2 + \sqrt{d   \log(n/\delta)}\big) + \epsilon \sqrt{c_0}d \Big) \Big(\sum_{k \neq 1}^n \exp(-\btheta_t^\top \zb_k + \epsilon\|\btheta_t\|_q)  + \sum_{k \neq 2}^n \exp(-\btheta_t^\top \zb_k + \epsilon\|\btheta_t\|_q)  \Big)  \bigg) \notag\\
    &\leq \exp\bigg(2 \alpha_t  L(\btheta_t)  \Big(c_0\big(\|\bmu\|_2^2 + \sqrt{d   \log(n/\delta)}\big) + \epsilon \sqrt{c_0 }d \Big)  \bigg)
\end{align}
Substitute \eqref{eq:i1} and \eqref{eq:i2} into \eqref{eq:at1}, we have
\begin{align}\label{eq:at2}
    A_{t+1} &\leq A_t \cdot \exp\Bigg(-\alpha_t E_2^t\bigg(  \Big(\frac{d}{c_0} - \epsilon\sqrt{c_0}d \Big) A_t - \Big(c_0 d + \epsilon\sqrt{c_0}d \Big)  \bigg) \exp \big( \epsilon\|\btheta_t\|_q \big) \Bigg) \notag\\
    &\qquad \cdot \exp\bigg(2 \alpha_t  L(\btheta_t)  \Big(c_0\big(\|\bmu\|_2^2 + \sqrt{d   \log(n/\delta)}\big) + \epsilon \sqrt{c_0}d \Big)  \bigg).
\end{align}
Let us consider two cases here. If $(d/c_0 - \epsilon\sqrt{c_0}d ) A_t - (c_0 d + \epsilon\sqrt{c_0}d) > c_0 d$, i.e., $A_t > (2c_0 + \epsilon\sqrt{c_0})/(1/c_0 - \epsilon\sqrt{c_0})$, we further have
\begin{align*}
    A_{t+1} &\leq A_t \cdot \exp\Big(-\alpha_t E_2^t c_0 d \exp \big( \epsilon\|\btheta_t\|_q \big) \Big)  \cdot \exp\bigg(2 \alpha_t  L(\btheta_t)  \Big(c_0\big(\|\bmu\|_2^2 + \sqrt{d   \log(n/\delta)}\big) + \epsilon \sqrt{c_0}d \Big)  \bigg) \\
    &\leq A_t \cdot \exp\Big(-\alpha_t E_2^t c_0 d \exp \big( \epsilon\|\btheta_t\|_q \big) \Big)\\
    &\qquad\cdot \exp\bigg(2 \alpha_t n E_2^t \Big(c_0\big(\|\bmu\|_2^2 + \sqrt{d   \log(n/\delta)}\big) + \epsilon \sqrt{c_0}d \Big)\exp \big( \epsilon\|\btheta_t\|_q \big)  \bigg)\\
    &= A_t \cdot \exp\Big(-\alpha_t E_2^t c_0 \big(d - 2n\|\bmu\|_2^2 - 2n\sqrt{d   \log(n/\delta)} - 2n\epsilon \sqrt{c_0} \big) \exp \big( \epsilon\|\btheta_t\|_q \big) \Big) \\
    &\leq A_t,
\end{align*}
where the second inequality is due to the fact that $L(\btheta_t) = \sum_{k=1}^n E_k^t \exp \big( \epsilon\|\btheta_t\|_q \big)$ and $E_2^t = \max_k E_k^t$ while the last inequality holds since $d \geq C\cdot\max\{n\|\bmu\|_2^2, n^2 \log(n/\delta)\}$. 

On the other hand, if $A_t \leq (2c_0 + \epsilon\sqrt{c_0})/(1/c_0 - \epsilon\sqrt{c_0})$, we have
\begin{align*}
    A_{t+1} &\leq A_t \cdot \exp\Big(\alpha_t E_2^t \big(c_0 d + \epsilon\sqrt{c_0}d \big)   \exp \big( \epsilon\|\btheta_t\|_q \big) \Big)  \\
    &\qquad \cdot \exp\bigg(2 \alpha_t  L(\btheta_t)  \Big(c_0\big(\|\bmu\|_2^2 + \sqrt{d   \log(n/\delta)}\big) + \epsilon \sqrt{c_0}d \Big)  \bigg)\\
    &\leq A_t \cdot \exp\Big(\alpha_t L(\btheta_t)  \big(c_0 d + \epsilon\sqrt{c_0}d \big)    \Big)  \cdot \exp\bigg(2 \alpha_t  L(\btheta_t)  \Big(c_0\big(\|\bmu\|_2^2 + \sqrt{d   \log(n/\delta)}\big) + \epsilon \sqrt{c_0}d \Big)  \bigg)\\
    &\leq A_t \cdot \exp\bigg(2\alpha_t  n \Big(c_0\big(2\|\bmu\|_2^2 + 2\sqrt{d   \log(n/\delta)} + d\big) + 3\epsilon \sqrt{c_0}d \Big)  \bigg)\\
    &\leq (2c_0 + \epsilon\sqrt{c_0})/(1/c_0 - \epsilon\sqrt{c_0}) \cdot \exp(1/8) \\
    &\leq 5c_0^2,
\end{align*}
where the first inequality is due to the fact that $A_t > 0$, the third inequality holds by Lemma \ref{lemma:bound_loss}, the fourth inequality is because $\alpha_t \leq 1/(c_0Cnd)$ and $d \geq C\cdot\max\{n\|\bmu\|_2^2, n^2 \log(n/\delta)\}$ and the last inequality is because $\epsilon < C'$ and $C'$ can be chosen such that $C' \leq 1/(2c_0^{1.5})$ and we have $ 1/c_0 - \epsilon\sqrt{c_0} > 1/(2c_0)$.

This concludes the proof.
\end{proof}

\subsection{Proof of Lemma \ref{lemma:lastlemma}}
\begin{proof}
Note that 
\begin{align}\label{eq:tmp}
    \bmu^\top \btheta_{t+1} &= \bmu^\top \Big(\btheta_t + \alpha_t \sum_{k=1}^n \big(\zb_k - \epsilon \partial\|\btheta_t\|_q\big) \exp(-\btheta_t^\top \zb_k + \epsilon \|\btheta\|_1) \Big) \notag\\
    &= \bmu^\top \btheta_t - \alpha_t \epsilon \cdot \bmu^\top  \partial\|\btheta_t\|_q \cdot L(\btheta_t) + \alpha_t \sum_{k=1}^n \big(\bmu^\top \zb_k \big) \exp(-\btheta_t^\top \zb_k + \epsilon \|\btheta\|_q )\Big) \notag\\
    &\geq \bmu^\top \btheta_t - \alpha_t \epsilon \|\bmu\|_q \cdot L(\btheta_t) + \alpha_t \sum_{k \in \cC} \big(\bmu^\top \zb_k \big) \exp(-\btheta_t^\top \zb_k + \epsilon \|\btheta\|_q) \Big)\notag\\
    &\qquad+ \alpha_t \sum_{k \in \cN} \big(\bmu^\top \zb_k \big) \exp(-\btheta_t^\top \zb_k + \epsilon \|\btheta\|_q) \Big),
\end{align}
where the inequality holds in the same way as in \eqref{eq:signproduct}. By Lemma \ref{lemma:innerproduct_bound} (\eqref{eq:bound3} and \eqref{eq:bound4}), we further bound \eqref{eq:tmp} by
\begin{align}\label{eq:tmp1}
    \bmu^\top \btheta_{t+1} 
    &\geq \bmu^\top \btheta_t - \alpha_t \epsilon \|\bmu\|_q \cdot L(\btheta_t) + \frac{\alpha_t}{2} \sum_{k \in \cC} \|\bmu\|_2^2 \exp(-\btheta_t^\top \zb_k + \epsilon \|\btheta\|_q) \Big)\notag\\
    &\qquad- \frac{3\alpha_t}{2} \sum_{k \in \cN} \|\bmu\|_2^2 \exp(-\btheta_t^\top \zb_k + \epsilon \|\btheta\|_q )\Big) \notag\\
    &= \bmu^\top \btheta_t - \alpha_t \epsilon \|\bmu\|_q \cdot L(\btheta_t) + \frac{\alpha_t}{2}  \|\bmu\|_2^2 L(\btheta_t) - 2\alpha_t\|\bmu\|_2^2 \sum_{k \in \cN}  \exp(-\btheta_t^\top \zb_k + \epsilon \|\btheta\|_q) \Big).
\end{align}
Note that we have
\begin{align*}
    \sum_{k \in \cN}  \exp(-\btheta_t^\top \zb_k + \epsilon \|\btheta\|_q) &= \sum_{k \in \cN}  \exp(-\btheta_t^\top \zb_k ) \cdot\exp(\epsilon \|\btheta\|_q) \\
    &\leq c_3(\eta + c_1)n \cdot 
    \Big(\max_k E_k \Big)\cdot\exp(\epsilon \|\btheta\|_q)\\
    &\leq  c_3(\eta + c_1) L(\btheta_t)\\
    &\leq \frac{1}{8} L(\btheta_t),
\end{align*}
where the first inequality is due to Lemma \ref{lemma:bound_loss} and the last inequality is because $\eta < 1/C$ and $c_1$ can be chosen arbitrarily small given sufficient large $C$.
Therefore, \eqref{eq:tmp1} can be further written as
\begin{align}\label{eq:tmp2}
    \bmu^\top \btheta_{t+1} 
    &\geq \bmu^\top \btheta_t - \alpha_t \epsilon \|\bmu\|_q \cdot L(\btheta_t) + \frac{\alpha_t}{2}  \|\bmu\|_2^2 L(\btheta_t) - \frac{\alpha_t}{4}\|\bmu\|_2^2  L(\btheta_t)  \notag\\
    &= \bmu^\top \btheta_t + \alpha_t\bigg(\frac{\|\bmu\|_2^2 }{4} - \epsilon\|\bmu\|_q  \bigg)    \cdot L(\btheta_t) \notag\\
    &=  \bigg(\frac{\|\bmu\|_2^2 }{4} - \epsilon\|\bmu\|_q  \bigg)   \cdot \sum_{m=0}^t \alpha_m L(\btheta_m),
\end{align}
where the last equality is due the fact that $\btheta_0 = \zero$.
Now we multiply $\|\wb\|_2 / \|\btheta_{t+1}\|_2$ on both sides of \eqref{eq:tmp2} and take $t \to \infty$
\begin{align*} 
    \lim_{t\to \infty} \frac{\|\wb\|_2 (\bmu^\top \btheta_{t+1})}{\|\btheta_{t+1}\|_2}
    &\geq \lim_{t\to \infty} \bigg(\frac{\|\bmu\|_2^2 }{4} - \epsilon\|\bmu\|_q  \bigg)  \frac{\|\wb\|_2}{\|\btheta_{t+1}\|_2} \cdot \sum_{m=0}^t \alpha_m L(\btheta_m).
\end{align*}
Since $\|\wb\|_2 = 1$, and by Lemma \ref{lemma:bound_loss}, it is easy to observe that  $ \wb = \lim_{t \to \infty} \btheta_t / \| \btheta_t\|_2$, we have
\begin{align*} 
    \bmu^\top \wb
    &\geq  \bigg(\frac{\|\bmu\|_2^2 }{4} - \epsilon\|\bmu\|_q  \bigg) \cdot  \lim_{t\to \infty}    \frac{\sum_{m=0}^t \alpha_m L(\btheta_m)}{\|\btheta_{t+1}\|_2}\\
    &\geq \bigg(\frac{\|\bmu\|_2^2 }{4} - \epsilon\|\bmu\|_q  \bigg)  \frac{1}{  (\sqrt{c_0} + \epsilon)\sqrt{d} }.
\end{align*}
where the last inequality is due to Lemma \ref{lemma:bound_vt}.
Note that Lemma \ref{lemma:bound_loss} also suggests that $\|\btheta_t/\|\btheta_t\|_2 - \wb\|_2 \leq c_3 \log n/ \log t$, we have 
\begin{align*} 
    \bmu^\top\wb  &= \bmu^\top\bigg(\wb - \frac{\btheta_{t}}{\|\btheta_{t}\|_2} +\frac{\btheta_{t}}{\|\btheta_{t}\|_2} \bigg) \\
    &\leq \|\bmu\|_2\cdot\bigg\|\wb - \frac{\btheta_{t}}{\|\btheta_{t}\|_2}\bigg\|_2 + \frac{\bmu^\top\btheta_{t}}{\|\btheta_{t}\|_2} \\
    &\leq \frac{c_3\|\bmu\|_2\log n}{\log t} + \frac{\bmu^\top\btheta_{t}}{\|\btheta_{t}\|_2}.
\end{align*}
Therefore, 
\begin{align*} 
    \frac{\bmu^\top\btheta_{t}}{\|\btheta_{t}\|_2} \geq \bmu^\top\wb - \frac{c_3\|\bmu\|_2\log n}{\log t} \geq \bigg(\frac{\|\bmu\|_2^2 }{4} - \epsilon\|\bmu\|_q  \bigg)  \frac{1}{  (\sqrt{c_0} + \epsilon)\sqrt{d} } - \frac{c_3\|\bmu\|_2\log n}{\log t}.
\end{align*}
\end{proof}

\section{Auxiliary Lemmas}
\begin{theorem}[Proposition 5.10 in~\cite{vershynin2010introduction}]
\label{lemma:vershynin5.10}
Let $X_1,X_2,\ldots,X_n$ be independent centered sub-Gaussian random variables, and let $K=\max_i\|X_i\|_{\psi_2}$. Then for every $a=(a_1,a_2,\ldots,a_n) \in \RR^n$ and for every $t>0$, we have
\begin{align*}
\PP\bigg(\bigg|\sum_{i=1}^n a_i X_i\bigg|>t\bigg) \le \exp\Big(-\frac{Ct^2}{K^2\|a\|_2^2}\Big),
\end{align*}
where $C>0$ is a constant.
\end{theorem}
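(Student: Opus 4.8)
The plan is to prove this Hoeffding-type tail bound via the classical exponential-moment (Chernoff) method. The key analytic input is the standard moment generating function characterization of centered sub-Gaussian variables: there is an absolute constant $c_1 > 0$ such that every centered sub-Gaussian $X$ satisfies $\EE[\exp(sX)] \leq \exp(c_1 s^2 \|X\|_{\psi_2}^2)$ for all $s \in \RR$. I would first establish this bound, which follows from the definition of the $\psi_2$ Orlicz norm together with the equivalence between the tail, moment, and MGF formulations of sub-Gaussianity (expanding $\exp(sX)$, using $\EE[X] = 0$ to kill the linear term, and controlling the higher moments by the $\psi_2$ norm).

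Next I would control the MGF of the weighted sum $S := \sum_{i=1}^n a_i X_i$. Since the $X_i$ are independent, the MGF factorizes, and applying the single-variable bound to each term $a_i X_i$ (whose sub-Gaussian norm equals $|a_i|\,\|X_i\|_{\psi_2} \leq |a_i| K$) gives
\begin{align*}
\EE[\exp(sS)] = \prod_{i=1}^n \EE[\exp(s a_i X_i)] \leq \prod_{i=1}^n \exp(c_1 s^2 a_i^2 K^2) = \exp\big(c_1 s^2 K^2 \|a\|_2^2\big).
\end{align*}
By the exponential Markov inequality, for any $s > 0$ we get $\PP(S > t) \leq e^{-st}\,\EE[\exp(sS)] \leq \exp(-st + c_1 s^2 K^2 \|a\|_2^2)$. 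Optimizing the quadratic exponent over $s$ by choosing $s = t/(2c_1 K^2 \|a\|_2^2)$ yields $\PP(S > t) \leq \exp\big(-t^2/(4 c_1 K^2 \|a\|_2^2)\big)$.

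Finally I would pass to the two-sided statement. The variables $-X_i$ are again centered sub-Gaussian with the same norms, so the identical argument applied to $-S$ bounds $\PP(S < -t)$, and a union bound gives $\PP(|S| > t) \leq 2\exp\big(-t^2/(4 c_1 K^2 \|a\|_2^2)\big)$, which is exactly the claimed form up to the bounded leading constant and a relabeling of $C$. I expect the only genuinely technical point to be the first step, the passage from the Orlicz $\psi_2$ norm to the exponential-moment bound; the factorization over independent coordinates, the Chernoff optimization, and the symmetrization are routine.
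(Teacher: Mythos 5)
Your argument is correct and is exactly the canonical proof of this statement; the paper itself gives no proof (it imports the result as Proposition 5.10 of Vershynin, 2010), and Vershynin's own proof follows precisely your route: the MGF bound $\EE[\exp(sX)]\le\exp\big(c_1 s^2\|X\|_{\psi_2}^2\big)$ for centered sub-Gaussian $X$, factorization of the MGF of $S=\sum_i a_iX_i$ over the independent coordinates, the exponential Markov inequality optimized at $s=t/(2c_1K^2\|a\|_2^2)$, and symmetrization via $-S$. The one step to be careful about is your closing claim that the leading factor $2$ is ``exactly the claimed form up to \ldots a relabeling of $C$.'' A multiplicative prefactor cannot be absorbed into the exponent's constant uniformly over all $t>0$: for $n=1$, $X_1$ Rademacher, $a_1=1$, and $t=1/2$, the left side equals $1$ while $\exp(-Ct^2/K^2)<1$ for every $C>0$, so the constant-free inequality printed in the paper is, read literally for every $t>0$, not provable. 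This is a (common and harmless) overstatement on the paper's side rather than a defect of your proof --- Vershynin's original statement carries a prefactor $e$ for exactly this reason. Absorbing your factor $2$ is legitimate precisely when the exponent exceeds an absolute constant (e.g.\ $2e^{-x}\le e^{-x/2}$ once $x\ge 2\ln 2$), and that is the only regime in which the paper invokes the bound (in the proofs of Theorems \ref{theorem:poprisk} and \ref{theorem:popadvrisk} the bound is only meaningful when the exponent is large), so your proof fully supports every use the paper makes of the statement; just state the two-sided bound with its factor $2$ and note the absorption condition explicitly.
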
 
 

\begin{lemma}\label{lemma:partial_norm_bound}
For any $\btheta \in \RR^d$, 
\begin{align*}
&\big\|\partial\|\btheta\|_q \big\|_2 \leq \sqrt{d}, \ \big\|\partial\|\btheta\|_q \big\|_p = 1.
\end{align*}
\end{lemma}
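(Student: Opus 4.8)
The plan is to prove the two claims separately, establishing the $\ell_p$ identity first and then obtaining the $\ell_2$ bound as an immediate corollary. For any $\btheta \neq \zero$ and $1 < q < \infty$ the map $\btheta \mapsto \|\btheta\|_q$ is differentiable, and differentiating $\big(\sum_i|\theta_i|^q\big)^{1/q}$ coordinatewise gives the closed form
\begin{align*}
\partial\|\btheta\|_q = \frac{\mathrm{sign}(\btheta)\odot|\btheta|^{\circ(q-1)}}{\|\btheta\|_q^{q-1}},
\end{align*}
where $\mathrm{sign}$ and $|\cdot|$ act entrywise. I would first record this formula and then work directly with it.

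For the first identity I would compute the $\ell_p$ norm of this explicit vector. Raising to the $p$-th power,
\begin{align*}
\big\|\partial\|\btheta\|_q\big\|_p^p = \frac{\sum_i |\theta_i|^{(q-1)p}}{\|\btheta\|_q^{(q-1)p}}.
\end{align*}
The crucial step is the conjugacy relation $1/p+1/q=1$, which rearranges to $p = q/(q-1)$ and hence $(q-1)p = q$. Substituting, the numerator becomes $\sum_i|\theta_i|^q = \|\btheta\|_q^q$ and the denominator $\|\btheta\|_q^q$, so the ratio equals $1$ and $\big\|\partial\|\btheta\|_q\big\|_p = 1$. This is really the standard duality fact that a subgradient $g$ of the norm satisfies $g^\top\btheta = \|\btheta\|_q$ together with $\|g\|_p \leq 1$ in the dual norm, whence H\"older's inequality forces $\|g\|_p = 1$ whenever $\btheta \neq \zero$.

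The $\ell_2$ bound then follows without any further computation on the explicit form, using only the identity just proved. For every vector $v$ one has $\|v\|_\infty \leq \|v\|_p$ (since the maximum coordinate is dominated by the $p$-norm) and $\|v\|_2^2 = \sum_i v_i^2 \leq d\,\|v\|_\infty^2$, so $\|v\|_2 \leq \sqrt{d}\,\|v\|_\infty \leq \sqrt{d}\,\|v\|_p$. Applying this to $v = \partial\|\btheta\|_q$ and invoking $\big\|\partial\|\btheta\|_q\big\|_p = 1$ yields $\big\|\partial\|\btheta\|_q\big\|_2 \leq \sqrt{d}$.

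The only genuinely delicate point, and the one I expect to require the most care, is the treatment of the non-differentiable cases: $q = \infty$ (i.e.\ $p = 1$) and points $\btheta$ with vanishing coordinates, where $\partial\|\btheta\|_q$ must be read as a subgradient rather than a gradient. Here I would avoid case analysis on $q$ and instead fall back on the duality characterization above: any $g \in \partial\|\btheta\|_q$ obeys $\|g\|_p \leq 1$ and $g^\top\btheta = \|\btheta\|_q$, so H\"older gives $\|g\|_p = 1$ for all $\btheta \neq \zero$ uniformly over $q \in (1,\infty]$, after which the identical $\ell_2$ argument applies. This covers the full range $p \in [1,\infty)$ relevant to the main theorems with a single clean argument.
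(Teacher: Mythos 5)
Your proposal is correct and follows essentially the same route as the paper: the same explicit coordinate formula for $\partial\|\btheta\|_q$, the same use of the conjugacy identity $(q-1)p=q$ to get $\big\|\partial\|\btheta\|_q\big\|_p=1$, and the same elementary norm comparisons $\|v\|_2\le\sqrt{d}\,\|v\|_\infty\le\sqrt{d}\,\|v\|_p$ for the $\ell_2$ bound (you derive the $\ell_2$ bound as a corollary of the $\ell_p$ identity, while the paper applies the comparisons directly to $\btheta^{\circ(q-1)}$; the content is identical). Your additional duality remark handling the subgradient at non-differentiable points is a sound refinement that the paper omits.
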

\begin{proof}
Note that we have 
$$(\partial\|\btheta\|_q)_i = \frac{\theta_i^{q-1} }{\|\btheta\|_q^{q-1}}\cdot \sign(\btheta),$$
and since for any vector $\ub \in \RR^d$, $\|\ub\|_q \geq \|\ub\|_\infty, \|\ub\|_2 \leq \sqrt{d}\|\ub\|_\infty$, we have
\begin{align*}
    \big\|\partial\|\btheta\|_q \big\|_2 = \frac{\big\|\btheta^{ \circ (q-1)} \big\|_2}{\| \btheta\|_q^{q-1}} \leq \frac{\sqrt{d} \|\btheta\|_\infty^{q-1}}{\| \btheta\|_q^{q-1}} \leq \sqrt{d},
\end{align*}
where $\circ$ denotes element-wise power. This concludes the first part of the lemma.
For the second part, by $p$-norm definition we have
\begin{align*}
    \big\|\partial\|\btheta\|_q \big\|_p = \frac{\big\|\btheta^{ \circ (q-1)} \big\|_p}{\| \btheta\|_q^{q-1}}  = \frac{1}{\| \btheta\|_q^{q-1}}   \Big(\sum_{i=1}^d (\theta_i^{q-1})^p \Big)^{1/p} = \frac{1}{\| \btheta\|_q^{q-1}}  \bigg( \Big(\sum_{i=1}^d \theta_i^{q} \Big)^{1/q} \bigg)^{q-1} = 1.
\end{align*}
\end{proof}

\section{Additional Experiments}
In this section, we present the additional experiments covering more settings as well as more complex models such as 2-layer neural network.

\subsection{Adversarially Trained Linear Classifier Under Various Settings}

In Figures \ref{fig:risk-vs-t-d200-mu4},\ref{fig:risk-vs-t-d1000-mu3},\ref{fig:risk-vs-t1000-mu4}, we plot the adversarial risk of adversarially trained linear classifiers versus the training iterations $t$ for different perturbation level $\epsilon$ for various combinations of dimension $d$ and $\|\bmu\|_2$. Specifically, in Figure \ref{fig:risk-vs-t1000-mu4}, we can observe that with moderate perturbations and sufficient over-parameterization, adversarially trained linear classifiers can achieve near-optimal adversarial risks.

\begin{figure}[t!]
\centering
\subfigure[$\ell_2$ perturbation]{\includegraphics[width=0.45\textwidth]{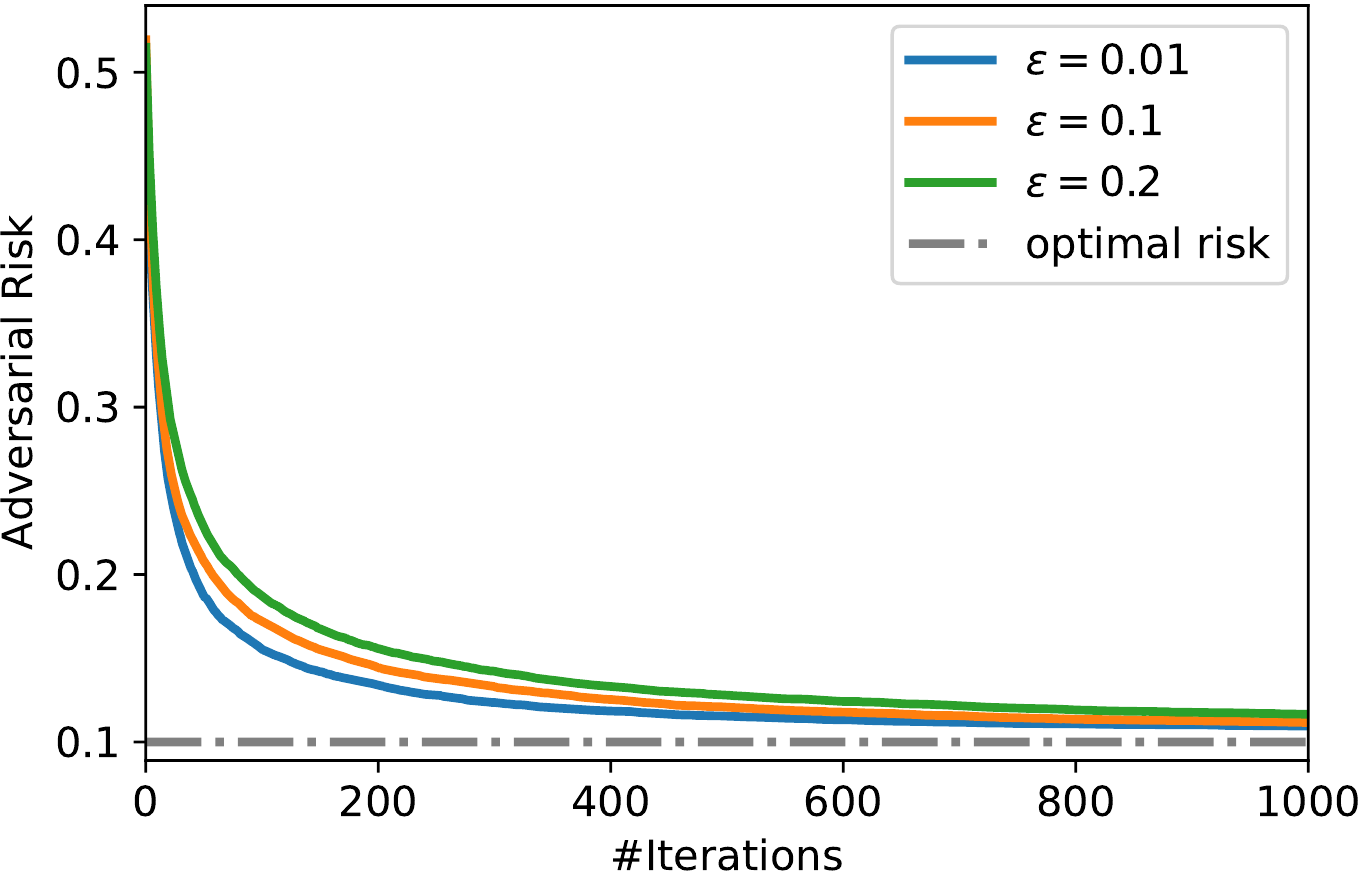}}
\subfigure[$\ell_\infty$ perturbation]{\includegraphics[width=0.45\textwidth]{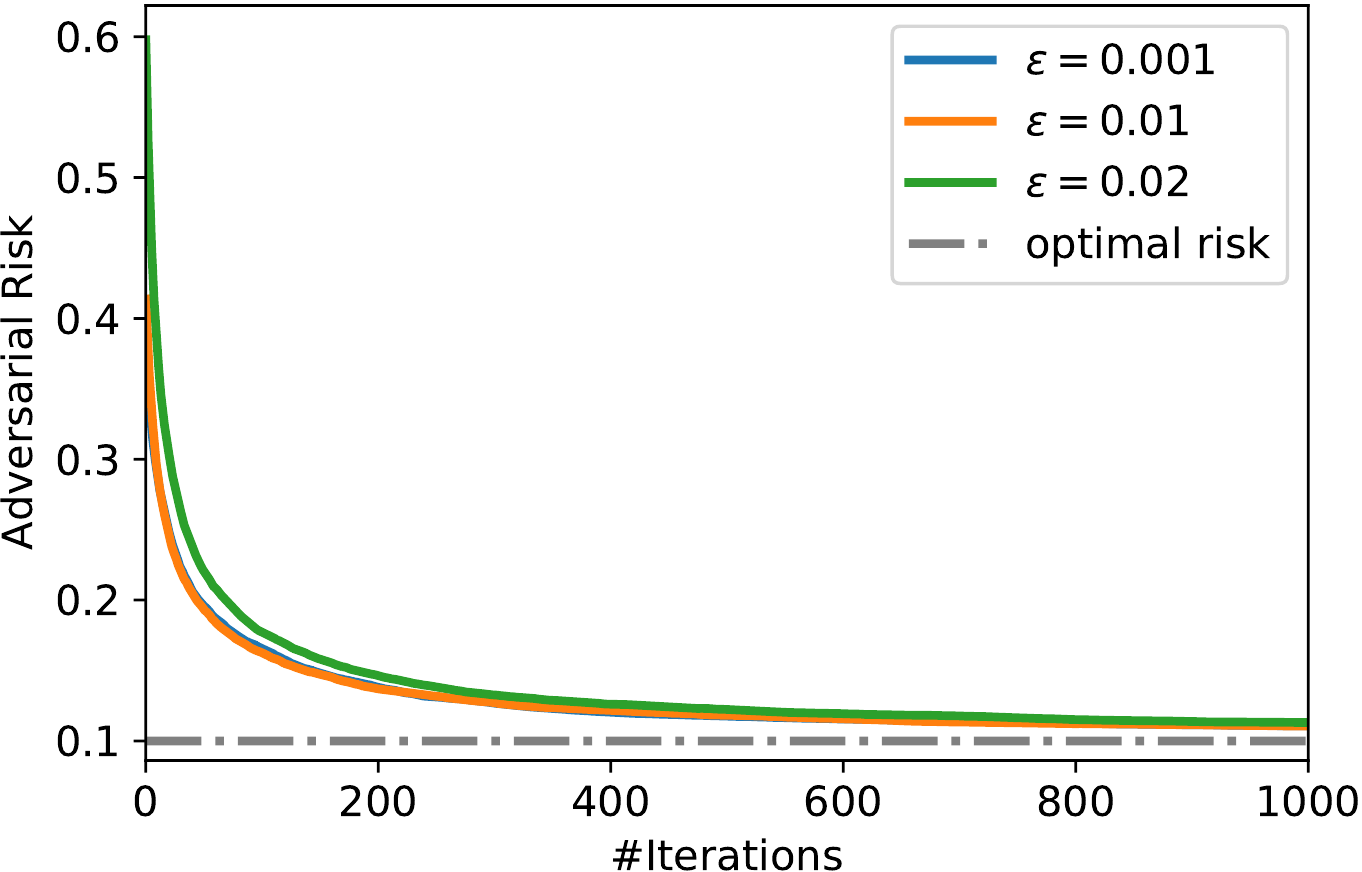}}
\setlength{\belowcaptionskip}{-10pt}
\caption{Risk and adversarial risk of adversarially trained linear classifiers versus the training iterations $t$ for different perturbation level $\epsilon$. The label noise level is set as $\eta=0.1$, the training set size $n=50$, dimension $d=200$ and $\|\bmu\|_2 = d^{0.4}$. The train error reaches $0$ for all experiments.
}
\label{fig:risk-vs-t-d200-mu4}
\end{figure}

\begin{figure}[t!]
\centering
\subfigure[$\ell_2$ perturbation]{\includegraphics[width=0.45\textwidth]{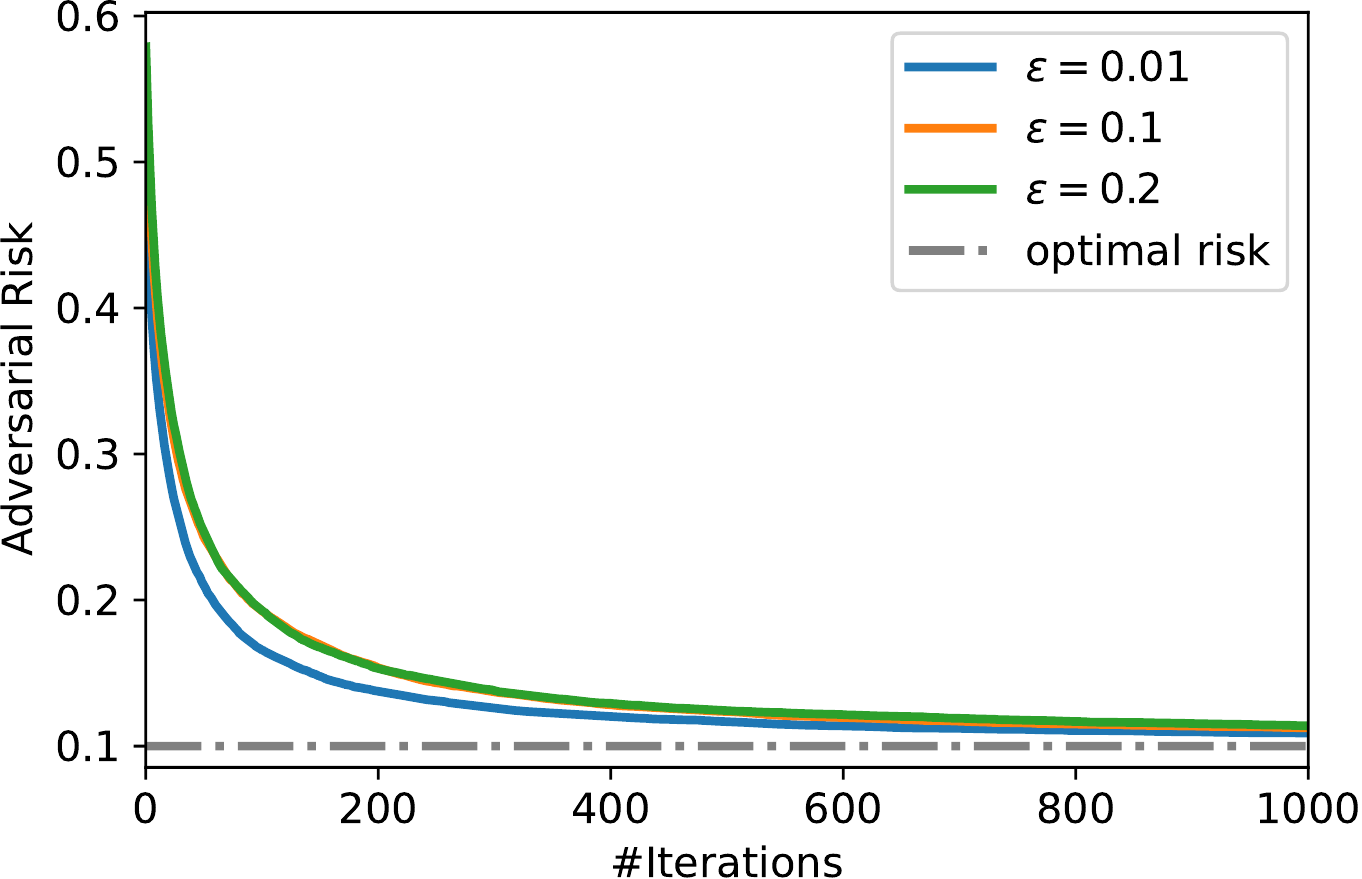}}
\subfigure[$\ell_\infty$ perturbation]{\includegraphics[width=0.45\textwidth]{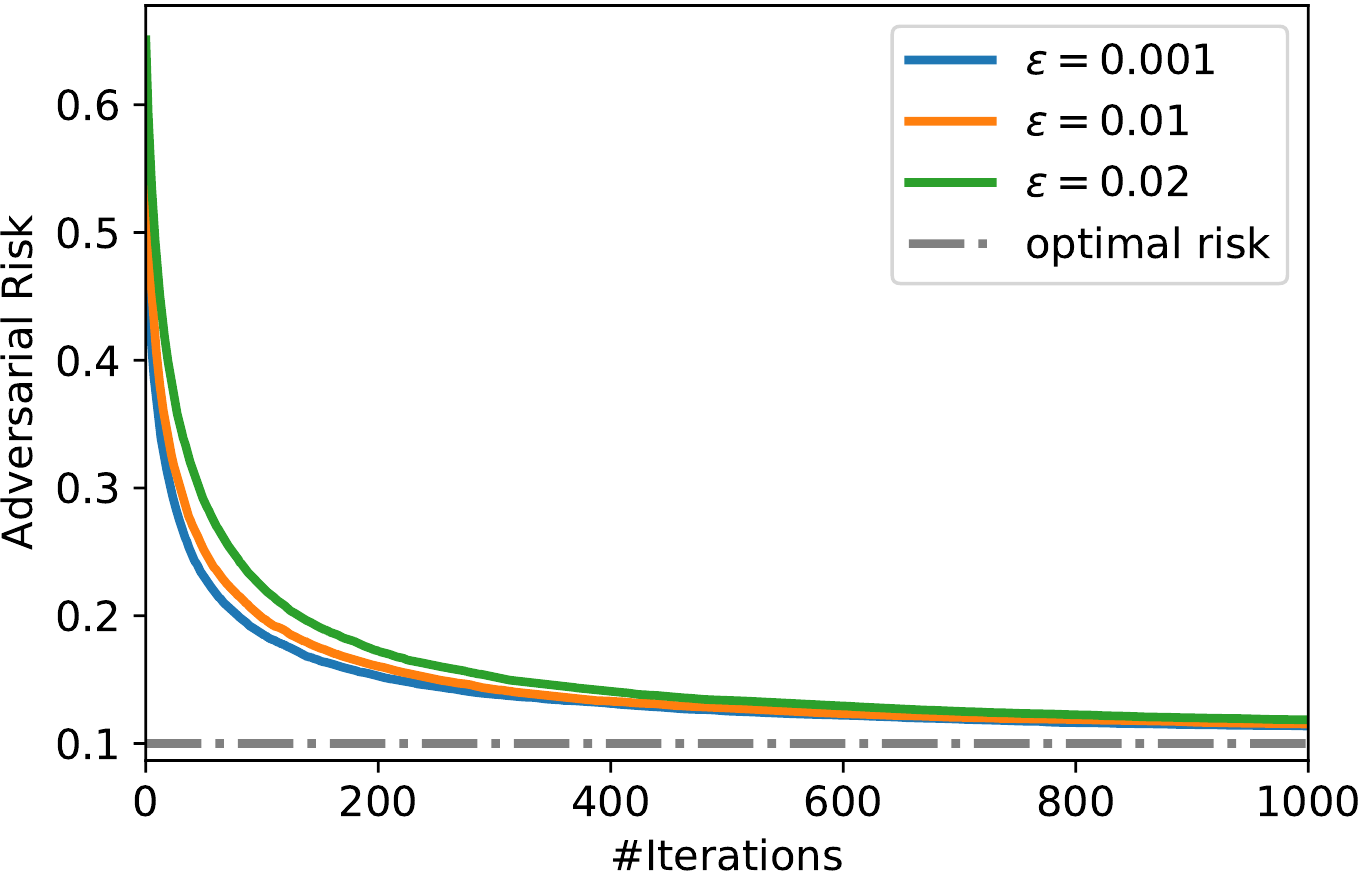}}
\setlength{\belowcaptionskip}{-10pt}
\caption{Risk and adversarial risk of adversarially trained linear classifiers versus the training iterations $t$ for different perturbation level $\epsilon$. The label noise level is set as $\eta=0.1$, the training set size $n=50$, dimension $d=1000$ and $\|\bmu\|_2 = d^{0.3}$. The train error reaches $0$ for all experiments.
}
\label{fig:risk-vs-t-d1000-mu3}
\end{figure}

\begin{figure}[t!]
\centering
\subfigure[$\ell_2$ perturbation]{\includegraphics[width=0.45\textwidth]{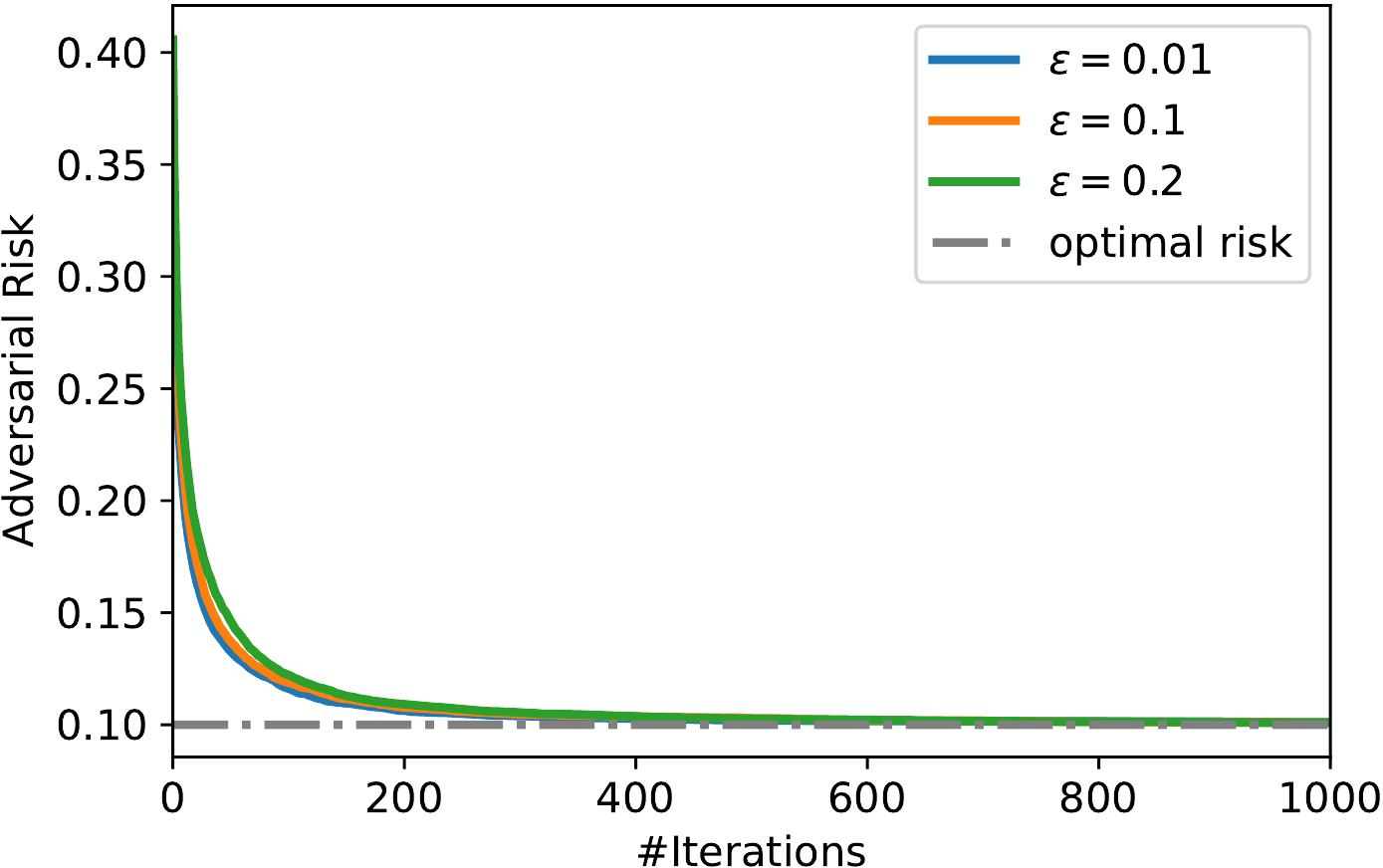}}
\subfigure[$\ell_\infty$ perturbation]{\includegraphics[width=0.45\textwidth]{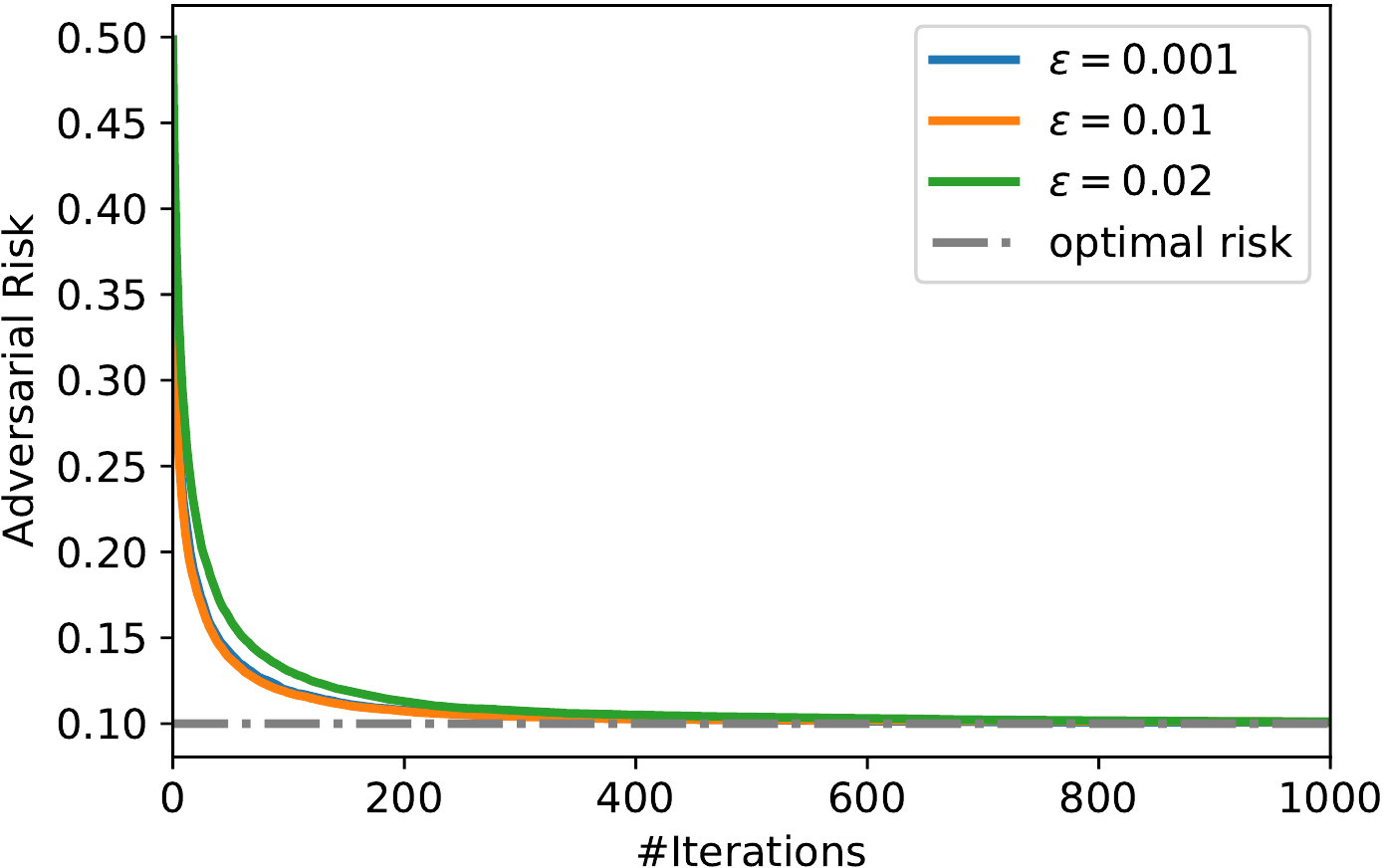}}
\setlength{\belowcaptionskip}{-10pt}
\caption{Risk and adversarial risk of adversarially trained linear classifiers versus the training iterations $t$ for different perturbation level $\epsilon$. The label noise level is set as $\eta=0.1$, the training set size $n=50$, dimension $d=1000$ and $\|\bmu\|_2 = d^{0.4}$. The train error reaches $0$ for all experiments.
}
\label{fig:risk-vs-t1000-mu4}
\end{figure}

\subsection{Adversarially Trained 2-layer Neural Networks}\label{sec:nn}
We have also conducted extra experiments on 2-layer neural networks with ReLU activation functions (one extra fix-dimension hidden layer). The data generation process are the same as our linear experiments. Note that in this setting, we no longer have the closed-form solutions to the inner maximization problem. Therefore, we following \cite{madry2017towards} and use $10$-step Projected Gradient Descent to get the inner maximizer.

\begin{figure}[t!]
\centering
\subfigure[$\ell_2$ perturbation]{\includegraphics[width=0.42\textwidth]{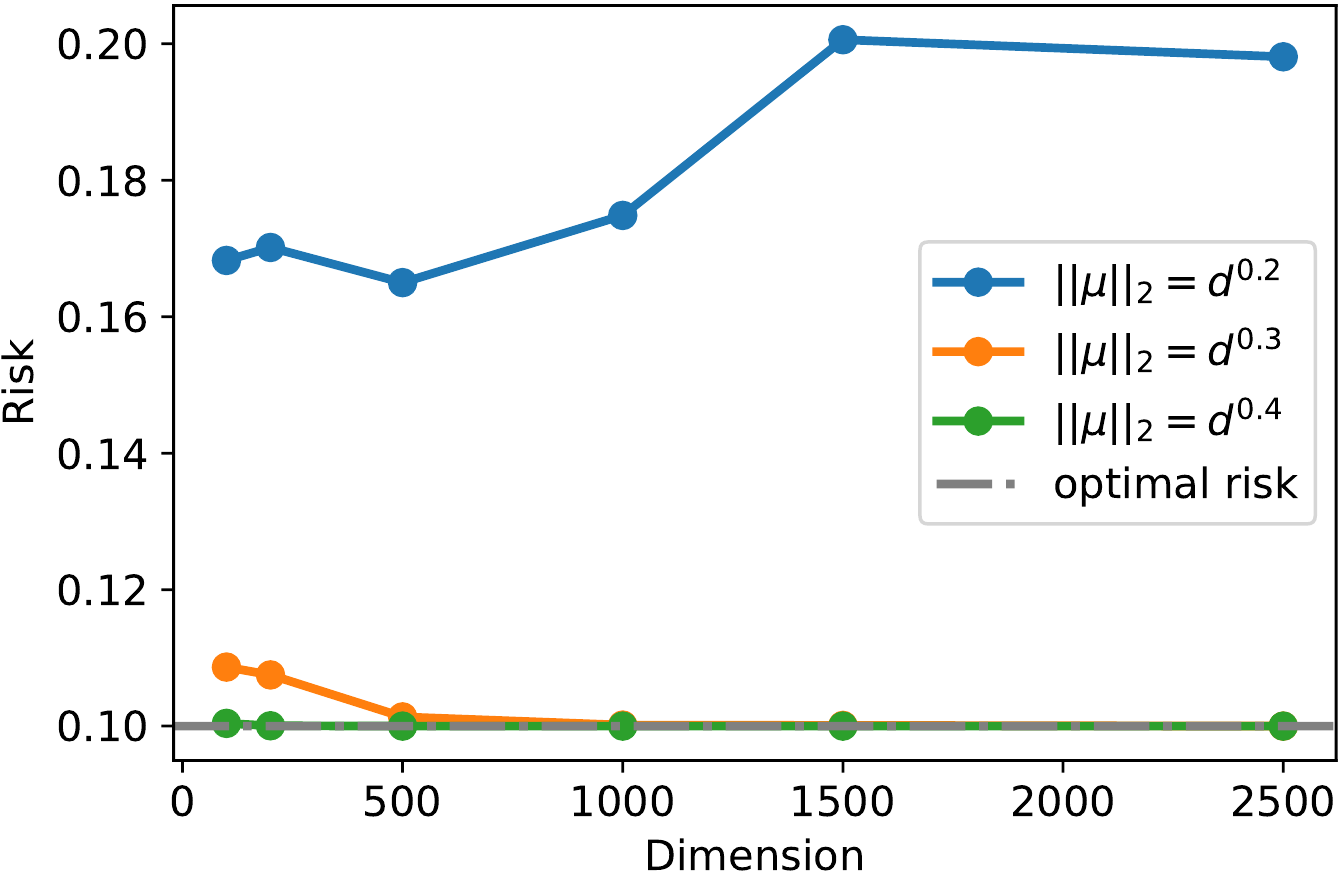}}
\subfigure[$\ell_2$ perturbation]{\includegraphics[width=0.42\textwidth]{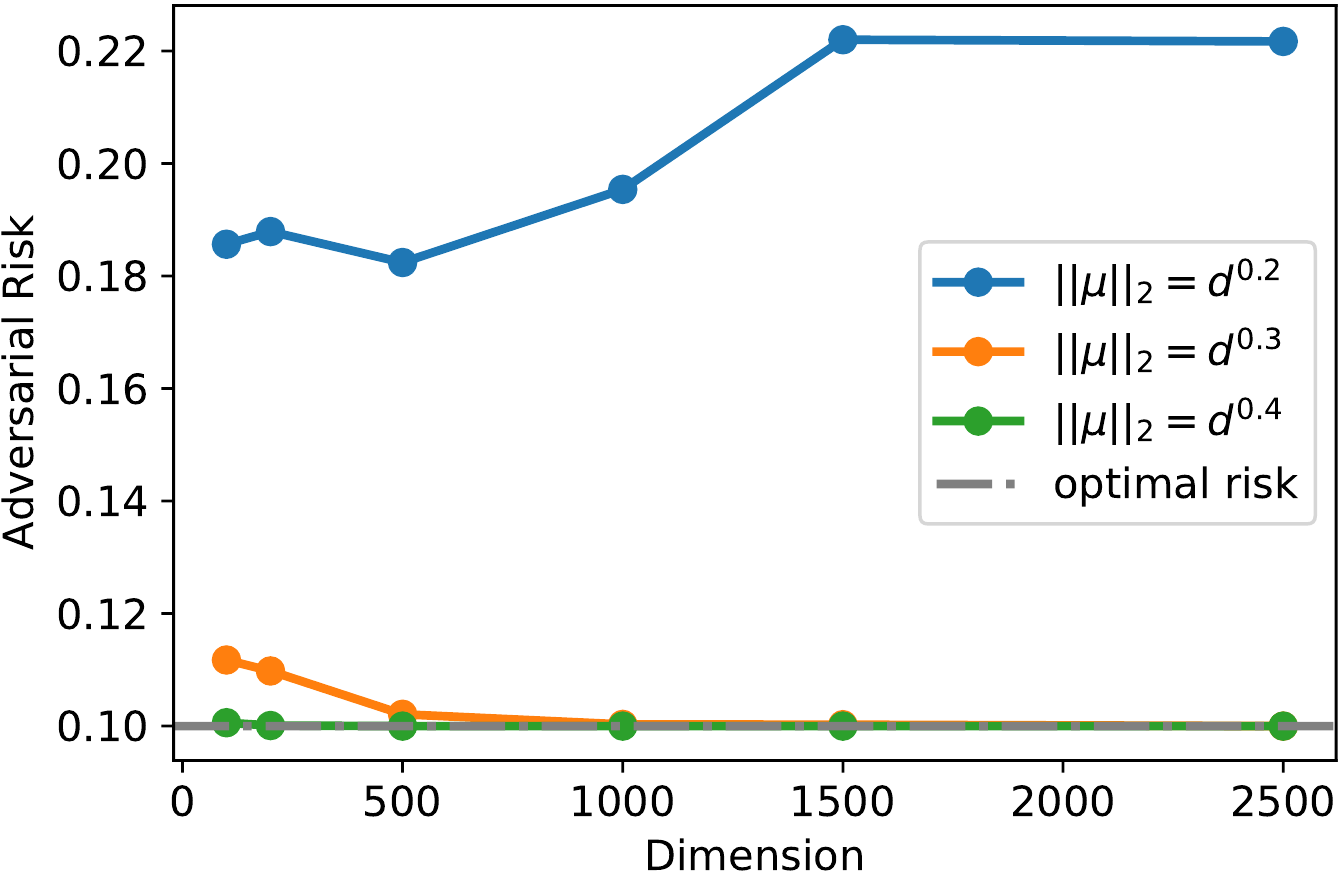}}
\subfigure[$\ell_\infty$ perturbation]{\includegraphics[width=0.42\textwidth]{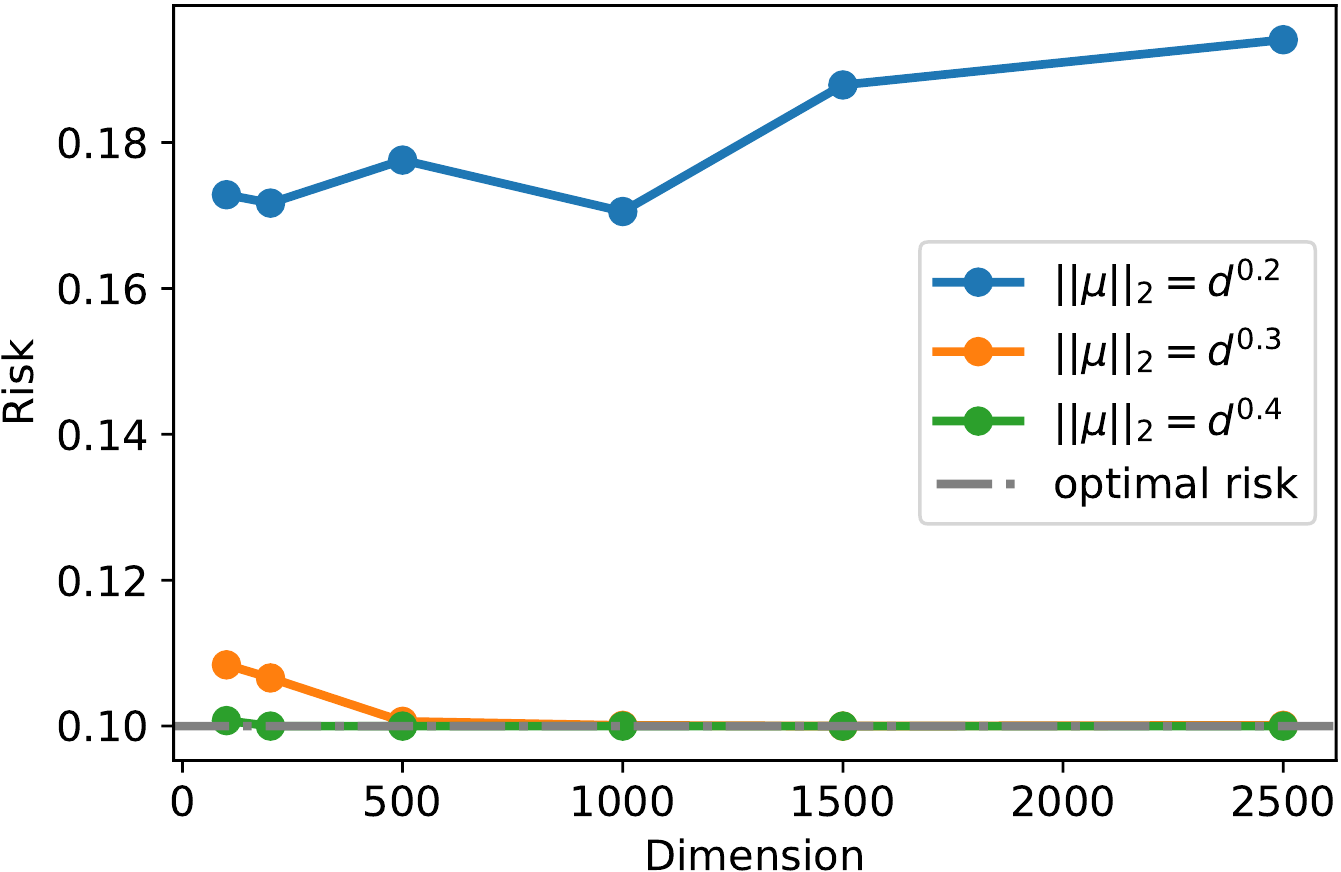}}
\subfigure[$\ell_\infty$ perturbation]{\includegraphics[width=0.42\textwidth]{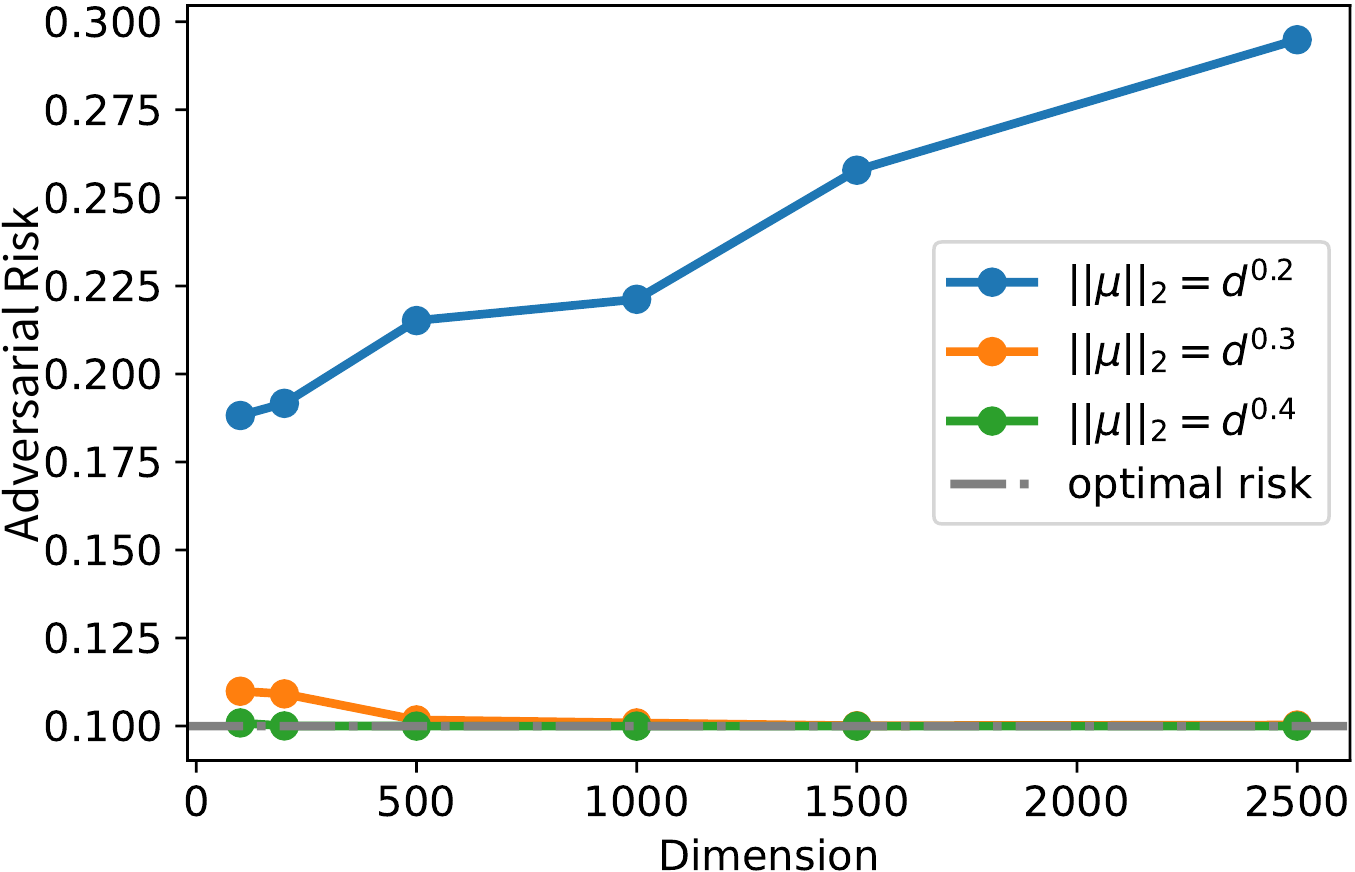}}
\setlength{\belowcaptionskip}{-10pt}
\caption{Risk and adversarial risk of adversarially trained 2-layer ReLU network versus the dimension $d$ under different scalings of $\bmu$. (a)(b) show the results for $\ell_2$ perturbation with $\epsilon=0.1$ and (c)(d) show the results for $\ell_\infty$ perturbation with $\epsilon=0.01$. 
The training error reaches $0$ for all experiments.
}
\label{fig:risk-vs-dimension-nn}
\end{figure}

As can be seen from Figure \ref{fig:risk-vs-dimension-nn}, the empirical results on 2-layer ReLU network suggest very similar trends as the linear classifier for both adversarial risks and standard risks. This further backs up our theoretical conclusions.

\bibliography{adv}
\bibliographystyle{ims}

\end{document}